\documentclass[11pt]{article}
\usepackage[utf8]{inputenc}
\usepackage[T1]{fontenc}    
\usepackage{graphicx} 
\usepackage{amsmath}
\usepackage{graphicx}
\usepackage{scalerel}
\usepackage[margin=1in]{geometry}

\usepackage{amsmath}

\usepackage[hidelinks,colorlinks=true,linkcolor=blue,citecolor=blue]{hyperref}
\usepackage[capitalise]{cleveref}
\crefname{claim}{Claim}{Claims}

\usepackage[utf8]{inputenc}
\usepackage{amsmath,amsfonts,amsthm,amssymb}
\usepackage{mathtools}
\usepackage{graphicx} 
\usepackage{graphicx}
\usepackage{scalerel}
\usepackage[margin=1in]{geometry}

\usepackage[numbers, sort&compress]{natbib}

\usepackage{thmtools}

\theoremstyle{plain}
\declaretheorem{theorem}
\declaretheorem[sibling=theorem]{lemma}
\declaretheorem[sibling=theorem]{corollary}
\declaretheorem{claim}

\newtheorem*{theorem*}{Theorem}
\newtheorem*{lemma*}{Lemma}

\declaretheoremstyle[
        spaceabove=\topsep, 
        spacebelow=\topsep, 
        bodyfont=\normalfont,
        notefont=\normalfont\bfseries,
        notebraces={}{},
        qed=$\blacksquare$, 
    ]{proofstyle}
\declaretheorem[style=proofstyle,numbered=no,name=Proof]{proof}

\usepackage{xcolor}
\usepackage[extdef=true]{delimset}
\usepackage{xspace}
\usepackage{crossreftools}

\newcommand{\reals}{\mathbb{R}}

\newcommand{\E}{\mathop{\mathbb{E}}}

\renewcommand{\epsilon}{\varepsilon}

\newcommand{\labelthis}[1]{\stepcounter{equation}\tag{\theequation}\label{#1}}

\newcommand{\ignore}[1]{}

\usepackage{fullpage}
\usepackage{xcolor}

\renewcommand{\P}{\mathbb{P}}
\newcommand{\sdot}[2]{\langle #1,#2\rangle}

\usepackage{amsmath, amssymb, amsthm}
\usepackage{fullpage}
\usepackage{xcolor}

\title{All ERMs Can Fail in Stochastic Convex Optimization \\ Lower Bounds in Linear Dimension}
\author{Tal Burla \\ \small Blavatnik School of Computer Science and AI \\ \small Tel Aviv University\\ \small \texttt{talburla@mail.tau.ac.il} \and  \normalsize Roi Livni\\\small
  School of Electrical and Computer Engineering\\
  \small Tel Aviv University\\
  \small \texttt{rlivni@tauex.tau.ac.il} \\ }
\date{}

\begin{document}

\maketitle
\begin{abstract}
We study the sample complexity of the \emph{best-case} Empirical Risk Minimizer in the setting of stochastic convex optimization. We show that there exists an instance in which the sample size is linear in the dimension, learning is possible, but the Empirical Risk Minimizer is likely to be \emph{unique} and to \emph{overfit}. This resolves an open question by \citeauthor{feldman2016generalization}. We also extend this to approximate ERMs.

Building on our construction we also show that (constrained) Gradient Descent potentially overfits when horizon and learning rate grow w.r.t sample size. Specifically we provide a novel generalization lower bound of $\Omega\left(\sqrt{\eta T/m^{1.5}}\right)$ for Gradient Descent, where $\eta$ is the learning rate, $T$ is the horizon and $m$ is the sample size. This narrows down, exponentially, the gap between the best known upper bound of $O(\eta T/m)$ and existing lower bounds from previous constructions.  

\end{abstract}

\section{Introduction}
One of the central puzzles in contemporary learning theory is to understand how \emph{overparameterized} learning is possible. In the overparameterized regime, learners can often perfectly fit the training data. On the one hand, this allows the learner to explore versatile, complex models. However, reasoning about generalization becomes difficult and standard concentration arguments become weak or non-uniform. 
Yet overparameterization is central to the success of modern learning systems. Modern Machine Learning builds on highly overcapacitated models, and optimization methods that explore loss landscapes that are seeded with overfitting minima. 

Several works attribute this success to \emph{implicit bias} of the optimization algorithm \cite{neyshabur2014search,zhang2021understanding}. They postulate that from all fitting solutions, the algorithm converges to a statistically simpler subset. In contrast, other works suggest that the bias may be intrinsic to the loss landscape \cite{chiang2022loss,valle2018deep,buzaglo2024uniform}. These works suggest that for large models, a \emph{typical} solution that fits the data may already generalize.
To make this intuition precise, \citet*{chiang2022loss} proposed the following \emph{thought experiment}: 
\begin{center}
Suppose we pick a random solution that fits the data. \\How would its generalization behavior look? 
\end{center} 
While such an experiment is infeasible to carry out directly on real-world practical problems, the authors used intermediary datasets and proxy methodologies to conduct experiments that shed light on this question.

Another approach to tackle this question is to formalize the thought experiment within a theoretical setting that already captures the core elements of the problem. A natural and well-studied framework for this purpose is Stochastic Convex Optimization (SCO).
The question might seem too basic, due to the simplicity of the loss landscape. But as it turns out it already contains all the necessary ingredients to make the problem interesting. First, existing algorithms that are prototypical for existing learning methods, such as Stochastic Gradient Descent (SGD) \cite{robbins1951stochastic, hazan2016introduction}, come with provable guarantees and learn with sample size that is \emph{dimension independent}. Second, it is known \cite{feldman2016generalization,shalev2009stochastic} that, unless the number of examples is proportional to the dimension of the parameter space, there are bad overfitting minima. In fact, even natural algorithms such as GD can overfit \cite{Livni2024}. 

To summarize, the setting is an overparameterized setting in the following sense: Without 
dimension dependent sample size there are overfitting solutions, but there are learning algorithms that don't require dimension-dependent sample size.
As such, it is an ideal setting to investigate the role of the loss landscape, the behavior of generic empirical risk minimizers, and to analyze a Guess \& Check algorithm that randomly chooses a solution that fits the data.

Certain intermediate answers exist in the literature. In particular, \citet*{shalev2009stochastic} showed a construction where there is a unique Empirical Risk Minimizer (ERM) that doesn't generalize. Since the ERM is unique and overfitting, \emph{all} or \emph{generic} ERMs fail. However, there are two caveats in that construction. First, it is provided in dimension that is exponential in the sample size. Indeed, \citet{feldman2016generalization} asked, and left as an open problem, whether such a construction exists in dimension that is linear in the sample size. Second, if we relax the problem a little bit, and allow \emph{approximate} ERMs, even with negligibly small error (exponentially small), already the answer becomes unclear, as some approximate ERMs learn and even many so-called generic solutions (in fact even $w=0$ in some versions of the construction) generalize well.

In this paper we revisit the problem and we construct a convex learning instance, in linear dimension, where there exists a unique minimizer of the empirical risk that has large generalization error. This resolves Feldman's open problem. But we also extend this and show that even if we take an approximate ERM, with error of order $O(1/(m\sqrt{m}))$ where $m$ is the sample size, it still \emph{must} overfit. 

Thus, we obtain that a generic ERM, even approximate-ERM are not a good fit. One criticism that arises is that we look at \emph{generic} ERMs, but not on \emph{generic} problems. In particular, our result shows that there \emph{exists} a learning instance where all ERMs fail, but clearly doesn't tell that on every convex problem this happens. Our focus on worst-case SCO problems stems from the Algorithms we study. Algorithms such as SGD, regularized ERM \cite{bousquet2002stability, shalev2009stochastic}, or stable versions of GD \cite{bassily2020stability} all converge to generalizing solutions on \emph{every} convex learning problem with sample size that is independent of the dimension. As such, the question is whether convex loss landscapes have some \emph{shared} property that enables these algorithms to learn. Our study shows that their success is independent from some \emph{generic} property of the empirical minimizer, and succeed to learn \emph{even when} the Guess \& Check algorithm fails in SCO.

\paragraph{Insights on Gradient Descent} Beyond Guess \& Check, our proof and techniques provide new insights and new lower bounds for existing, first order, learning methods.
In more detail, our result shows that any algorithm with extremely small empirical training loss would overfit. That includes algorithms such as Gradient Descent when trained for a long time-horizon. Interestingly, the best known upper bound for constrained Gradient Descent w.r.t training time and sample is $O(\eta\sqrt{T}+ \eta T/ m)$ due to \citet*{bassily2020stability}. Our lower bound demonstrates that when $\eta T=\Theta(m\sqrt{m})$, GD overfits because of its small training error. 

We further develop this, and we use our construction and proof technique to deepen this result. We provide, then, a new lower bound, and we show that the generalization error of GD is lower bounded by $\Omega\left(\sqrt{\eta T/m^{1.5}}\right)$, throughout the curve. 
This is the first, polynomial, lower bound for constrained Gradient Descent that depicts the overfitting w.r.t training accuracy and sample size. It leaves as an open question to further narrow down the gap between the best known upper bound and lower bound.

\subsection{Related Work}
The study of generalization in overparametrized setup is an active, highly intense field of study often focused on deep learning \cite{neyshabur2014search,zhang2021understanding,arora2019fine, adlam2020neural, terjek2022framework}. 

Early work by \citet{shalev2009stochastic,feldman2016generalization} studied generalization in SCO and showed that learnability can be separated from ERMs. Subsequently, SCO has been further investigated, as a benchmark model for overparametrization \cite{livni2024information, koren2022benign, schliserman2024dimension, sekhari2021sgd, vansoverrapid}. Many of these works aim to identify the properties that govern generalization, focusing on implicit bias \cite{dauber2020can, amir2022thinking, sekhari2021sgd}, the role of regularization \cite{amir2021sgd, shalev2009stochastic, Livni2024}, worst-case ERM bounds \cite{CarmonLivniYehudayoff2024} or other important properties such as stability or information-theoretic bounds \cite{schliserman2024dimension,amir2021never,Livni2024, attias2024information, vansoverrapid}.

Our work takes a similar vein and looks at SCO as a case study to understand the behavior of \emph{typical} ERMs. Outside the context of convexity, studying random interpolating networks have been a central theme in both empirical and in several theoretical case studies \cite{chiang2022loss,valle2018deep, mingard2021sgd,harel2024provable, buzaglo2024uniform, feder2025universality}. 

\paragraph{The Sample Complexity of ERMs} A line of work studies the sample complexity of empirical risk minimizers (ERMs). \citet{shalev2009stochastic} first showed that ERMs may fail to generalize even with sufficient samples to learn. \citet{feldman2016generalization} later demonstrated that this phenomenon can occur in dimension linear in the sample size. That work also suggested a separation between uniform convergence and ERMs sample complexity. Later \citet*{CarmonLivniYehudayoff2024} provided a first upper bound that confirmed this separation and tightly characterized the sample complexity of ERMs. Subsequently, \citet{Livni2024} showed that this worst-case ERM behavior can be realized by natural algorithms, proving that vanilla GD generalizes no better than worst-case ERM.
 
All these results construct instances where some ERMs fail while others succeed. In contrast, our construction focuses on the generalization of the \emph{best-case} ERM. The main exception is \citet{shalev2009stochastic}, which presents an exponential-dimensional instance with a unique ERM that fails to generalize. However, in that setting, even an approximate ERM with exponentially small error does generalize.
\paragraph{Generalization Bounds for Gradient Descent}
Our final application is a new generalization lower bound for Gradient Descent. The sample complexity of Gradient Descent, particularly on non-smooth problems, was studied by \citet{bassily2020stability} that gave the first upper bound. Later, its sample complexity was studied in \citet{amir2021never, amir2021sgd} that showed that in exponential dimension it can overfit. \citet{Livni2024} gave a generalization lower bound in linear dimension, but a gap of $O(\eta T/m)$ remained between the best known upper bound and the lower bound. \citet{amir2021never} observed that, in exponential dimension, when the training is exponentially long GD will overfit. Here we close the gap further and show a generalization lower bound of order $\Omega(\sqrt{\eta T/m^{1.5}})$. Importantly, we consider \emph{constrained} (i.e. projected) GD. As \citet{zhang2023lower, nikolakakis2025select} observe, when GD is applied without projections, even in $1$ dimension (often considered an \emph{underparameterized setting}), GD will have generalization error of $O(\eta T/m)$. In this example, overfitting occurs once the solution norm reaches $\Omega(\sqrt{m})$. Overall then, in this setup, the loss values may become arbitrarily large in which case there is no hope for learnability. As such, this example exposes how learnability becomes ill-posed in unconstrained optimization, rather than revealing a genuine algorithmic limitation.
\section{Preliminaries and Setup}
We consider the standard Stochastic Convex Optimization (SCO) framework \citep{shalev2009stochastic}, with an arbitrary finite\footnote{since we are concerned with a lower bound, the finiteness is without loss of generality here} instance space \(\mathcal{Z}\) and a parameter set
\[
\mathcal{W}_d := \{ w \in \mathbb{R}^d : \|w\| \le 1 \},
\]
defined in \(d\)-dimensional Euclidean space.
We assume that there exists a loss function $f:\mathcal{W}_d\times\mathcal{Z}\to\mathbb{R}$ that is convex and $L$-Lipschitz in its first argument.
Recall that a function $f$ is convex in $\mathcal{W}_d$ if for every $w_1\in \mathcal{W}_d$, there exists $g\in \mathbb{R}^d$ such that:
\[ \forall w_2 \in \mathcal{W}_d, \quad f(w_1)\le f(w_2) + \sdot{g}{w_1-w_2}.\]
The set of $g$'s that satisfy the above equation at $w_1$ is called the set of subdifferentials at $w_1$ and is denoted $\partial f(w_1)$.
If $\partial f(w_1)$ contains a single vector then $f$ is differentiable at $w_1$ and the subdifferential is the derivative and is denoted $\nabla f(w_1)$. Finally, $f$ is called $L$-Lipschitz if every subgradient $g$ satisfies $\|g\|\le L$. Equivalently
\[ \forall w_1,w_2,\quad |f(w_1)-f(w_2)|\leq L\, \|w_1-w_2\|.\]
In addition, we say that the function $f$ is $\lambda$-strongly convex if for all $w_1\in\mathcal{W}_d$ there exists $g\in\partial f(w_1)$ such that for any $w_2\in\mathcal{W}_d$,
\begin{equation}\label{eq:strong_convexity_first_order}
f(w_1)
\le
f(w_2)
+ \sdot{g}{w_1-w_2}
-\frac{\lambda}{2}\|w_2-w_1\|^2 ,
\end{equation}

\paragraph{Learning.}
Our focus is on \emph{learning algorithms}, where a learning algorithm is described as an algorithm $\mathcal{A}$ that receives a finite input sample
$S=(z_1,\dots,z_m)\in\mathcal{Z}^m$, and outputs a parameter $w_S\in\mathcal{W}_d$. Our key assumption is that the sample is drawn i.i.d. from some unknown distribution $D$, i.e. $S\sim D^m$ and the goal of the learner is to minimize the population loss
\begin{equation}
F(w):= \E_{z\sim D}[f(w,z)] .
\end{equation}
We will say that a learner has sample complexity $m(\varepsilon)$ if, for any $m \ge m(\varepsilon)$, whenever $|S| \ge m$, with probability at least $1/2$:
\begin{equation}\label{eq:excess_risk_prob_def}
F(w_S)-\min_{w\in\mathcal{W}_d}F(w)\le \varepsilon
\end{equation}

\paragraph{Empirical Risk Minimizers.}
A standard approach in learning is to use an \emph{Empirical Risk Minimizer} (ERM).
Given a sample $S=(z_1,\dots,z_m)\in\mathcal{Z}^m$, the empirical risk is defined as
\begin{equation}
F_S(w) := \frac{1}{m}\sum_{i=1}^m f(w,z_i) .
\end{equation}
Given a sample $S$, a parameter $w_S\in \mathcal{W}_d$ is an $\epsilon$-ERM solution if it satisfies:
\[
F_S(w_S)\le \min_{w\in\mathcal{W}_d}F_S(w)+\epsilon .
\]
A $0$-ERM solution is simply called an ERM solution. In general, an $\epsilon$-ERM solution need not generalize to the population loss. It is known that there are ERMs that fail to generalize without dimension-dependent sample complexity. In particular, the min max rate generalization for ERMs is known to be \cite{feldman2016generalization, CarmonLivniYehudayoff2024}:
\begin{equation} \label{eq:erm_population_upper_bound}
    \E_{S\sim D^m}\!\left[F(\hat w_S)-\min_{w\in\mathcal{W}_d}F(w)\right]
=
\tilde\Theta\!\left(\frac{d}{m}+ \frac{1}{\sqrt{m}}\right).
\end{equation}
Specifically, \citet{feldman2016generalization} showed that there exists a learning instance such that, unless $\Omega(d)$ examples are observed, there exists w.h.p. a solution for the empirical risk that fails to generalize. 
\paragraph{Non-ERM learners}
In many learning setups such as PAC learning \cite{vapnik2015uniform, valiant1984theory} ERMs are known to be (up to logarithmic factors) minmax optimal. Namely, the statistical rate of learning is the same as the statistical rate of the worst-case ERMs. In SCO, though, this is not the case.

One method that improves over ERM is \emph{regularized ERM}. \citet{bousquet2002stability} showed that when $\hat f(\cdot,z)$ is $\lambda$-strongly convex then an (exact) ERM solution 
satisfies
\begin{equation} \label{eq:erm_strongly_convex}
    \E_{S\sim D^m}\!\left[\hat F(\hat w_S)-\min_{w\in\mathcal{W}_d}\hat F(w)\right]
=
O\!\left(\frac{1}{\lambda m}\right).
\end{equation}
As such, we can consider the minimizer of non-strongly convex objective with added regularization term:
\[ \min_{w\in \mathcal{W}_d} \frac{\lambda}{2} \|w\|^2 +F_S(w).\]
Indeed, \cite{bousquet2002stability} observed that, because the minimizer of the regularized objective is an $O(\lambda)$-ERM for the unregularized objective, if we choose $\lambda= O(1/\sqrt{m})$ we obtain a solution $w_S^\lambda$ such that:
\[ \E\left[ F(w_S^\lambda)- \min_{w\in \mathcal{W}_d} F(w)\right] = O\left(\frac{1}{\sqrt{m}}\right),\]
This significantly improves over the $\Omega(d/m)$-sample complexity of ERMs in the overparametrized setting.
 
Regularized ERM is not an ERM algorithm but an approximate ERM. We mention, then, that there are even natural learning algorithms (in fact SGD) that are inherently not ERMs, not even approximate ERMs \cite{vansoverrapid}, that obtain the optimal minmax statistical rate (and are in fact more efficient in some sense \cite{amir2021sgd}).
\paragraph{Gradient Descent.}
Another important optimization algorithm in this context is Gradient Descent (GD) which can also constitute an $\epsilon$-ERM. GD is a \emph{first-order} algorithm, which means that given a point $w\in\mathcal{W}_d$ and a sample $z\in\mathcal{Z}$ we assume \emph{first-order oracle access} to the function $f(w,z)$. In a nutshell, it means that we can query a (sub)-gradient $g\in\partial f(w,z)$, and in particular, we can calculate a subgradient $g\in\partial F_S(w)$.
We refer the reader for further details and background to any standard textbook in the field  \cite{hazan2016introduction, Bubeck2015, NemirovskyYudin1983}.
 
The algorithm operates on the empirical risk and is parameterized by a step size $\eta>0$ and a number of iterations $T\in\mathbb{N}$.
In more detail, given a sample $S=(z_1,\dots,z_m)$, GD initializes at $w_0=0$ and performs $T$ updates: at step $t\ge 1$ it queries $g_t\in \partial F_S(w_{t-1})$ and performs
\begin{equation} \label{eq:gd_update}
w_t
=\Pi \!\left(
w_{t-1}-\eta  g_t
\right).
\end{equation}
where $\Pi$ denotes projection onto the unit ball, and finally, the algorithm outputs 
\[ w_S^{\mathrm{GD}}= \frac{1}{T} \sum_{t=1}^T w_t.\]
We note in passing that our results apply to any suffix-averaged output such as, for example, last iterate.

With correct choice of hyperparameters, GD is a case of an $\epsilon$-ERM.
In particular, a classical result in convex optimization \cite{NemirovskyYudin1983} yields an optimization guarantee of
\begin{equation} \label{eq:gd_optimization_error}
F_S(w_S^{\mathrm{GD}})-\min_{w\in\mathcal{W}_d}F_S(w)
= \Theta \left(\eta +\frac{1}{\eta T} \right).
\end{equation}
Notice that a choice of $\eta=1/\sqrt{T}$ and $T=1/\epsilon^2$ provides an $O(\epsilon)$-ERM algorithm.
 
The generalization performance of Gradient Descent has also been studied.
In particular, for general convex $L$-Lipschitz losses, \citet{bassily2020stability} analyzed GD via algorithmic stability and proved that
\begin{equation} \label{eq:gd_population_upper_bound}
\E_{S\sim D^m}\!\left[
F_S(w_S^{\mathrm{GD}})- F(w_S^{\mathrm{GD}})
\right]
=
O\!\left(
\eta\sqrt{T}
+\frac{\eta T}{m}
\right).
\end{equation}
Observe that \cref{eq:gd_optimization_error,eq:gd_population_upper_bound} suggest a learning algorithm. Specifically, by choosing $T=m^2$ and $\eta=1/(m\sqrt{m})$, we obtain a stable version of GD that minimizes the population error with error $O(1/\sqrt{m})$, given $m$ samples. Again, similar to regularized ERM, it is a $O(1/\sqrt{m})$-ERM and not an ERM. Despite these similarities, in important technical respects, GD differs fundamentally from a regularized or implicitly biased algorithm \cite{dauber2020can}.

Subsequent lower bounds \citep{amir2021sgd, Livni2024} showed that the
$\eta\sqrt{T}$ term is unavoidable, establishing its tightness. In particular, taking both optimization and generalization into account, \cite{Livni2024} demonstrated that there exists a learning instance where
\begin{equation}\label{eq:gd_gen}
\E_{S\sim D^m}\!\left[
F(w_S^{\mathrm{GD}})- \min_{w\in \mathcal{W}_d} F(w)
\right]
=
\Omega\!\left(
\eta\sqrt{T}
+\frac{1}{\eta T}
\right).
\end{equation}
Notice that \cref{eq:gd_gen} does not rule out taking $T$ to infinity if $\eta < 1/\sqrt{T}$. On the other hand, \cref{eq:gd_population_upper_bound} hints that as $T$ goes to infinity generalization should deteriorate. For very large $T=\Omega(2^m)$, \citet{amir2021sgd}'s result showed that GD may overfit and provide low empirical risk but high generalization error. However, this result leaves a significant gap from the upper bound and does not rule out generalization even for $T=\Omega(m)$.

\section{Main Results}

We next present our main results, which provide complementary lower bounds to \cref{eq:erm_strongly_convex} and \cref{eq:gd_population_upper_bound}.

\subsection{Generalization Lower Bounds for Empirical Risk Minimizers}

\begin{theorem}
\label{thm:spesific_epsilon_erm_strongly_convex_gap_to_population}
Fix any $m\in \mathbb{N}$, then there exists $\epsilon = \Theta(m^{-3/2})$ and a finite instance space $\mathcal{Z}$, a distribution $D$ over $\mathcal{Z}$, and a $1$-Lipschitz loss $f:\mathcal{W}_{d}\times\mathcal{Z}\to\mathbb{R}$ in dimension $d=6\cdot m$ such that, with probability at least $1/2$ over $S\sim D^m$, every $\epsilon$-ERM solution $w_S$ satisfies
\[
F(w_S)-\min_{w\in\mathcal{W}_d}F(w)
\ge
\Omega(1).
\]
Moreover, $f$ is $\lambda$-strongly convex with $\lambda=m^{-3/2}$.
\end{theorem}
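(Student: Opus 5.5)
We construct an explicit instance in dimension $d=6m$ whose loss is $\frac{\lambda}{2}\|w\|^2$ plus a Feldman-type term. The overfitting part follows \citet{feldman2016generalization}. Take a family $\{v_1,\dots,v_N\}$ of vectors in $\reals^{d'}$ with $d'=\Theta(m)$ that are nearly orthogonal: $\sdot{v_i}{v_j}\le 1/8$ for $i\ne j$ and $\|v_i\|=1$. By a volume/Gilbert--Varshamov argument one can take $N=2^{\Theta(d')}$. Each instance $z$ is a random subset $V_z\subseteq[N]$, each index included independently with probability $1/2$. The loss is
\[
f_0(w,z)=\max\Big(\tfrac12,\ \max_{i\in V_z}\sdot{v_i}{w}\Big).
\]
With $N\gg 2^m$, w.h.p. some index $i^\star$ lies outside $\bigcup_{j\le m}V_{z_j}$. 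Then $w=v_{i^\star}$ has empirical $f_0$-loss $\tfrac12$, which is minimal, yet its population loss is $\ge \tfrac12\cdot 1+\tfrac12\cdot\tfrac12$. Hence it is bad by $\Omega(1)$, since $w=0$ attains $\tfrac12$.

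The difficulty is that $w=0$ is also an ERM of $f_0$, so the construction must make the bad point the \emph{unique} minimizer, and remain so up to $\epsilon$. We plan to add a linear "reward" coordinate block that pushes the minimizer toward an unseen direction. The plan has three steps.

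\textbf{Step 1.} Add a term $-\delta\,\sdot{u_z}{w}$, or better a term that rewards norm in directions the sample never "punished", so that the empirical objective strictly prefers moving along some unseen $v_{i^\star}$ of magnitude $\Theta(1)$. For instance, add coordinates encoding each sample's indicator so that the empirical gradient at $0$ points into the cone of unseen directions. Combined with the regularizer $\frac{\lambda}{2}\|w\|^2$, $\lambda=m^{-3/2}$, this makes $F_S$ $\lambda$-strongly convex, so the ERM $\hat w$ is unique.

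\textbf{Step 2.} Show that $\hat w$ has $\Omega(1)$ correlation with unseen $v_i$'s, giving $F(\hat w)-\min F=\Omega(1)$. To keep the dimension linear, we intend to use $\Theta(m)$ orthogonal "sample-slot" coordinates (hence the constant $6$) instead of exponentially many directions. Each sample $z_j$ penalizes $\sdot{e_{k}}{w}$ for the coordinates $k$ it covers. A block of coordinates left uncovered by the sample, which occurs with constant probability per coordinate, is where $\hat w$ places mass. This mass is then penalized by the population loss with constant probability.

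\textbf{Step 3.} Transfer from the exact ERM to all $\epsilon$-ERMs via strong convexity. By \eqref{eq:strong_convexity_first_order} applied at $\hat w$ with $0\in\partial F_S(\hat w)$ (or a feasible-direction version on the ball), any $\epsilon$-ERM satisfies $\|w_S-\hat w\|^2\le 2\epsilon/\lambda$. With $\epsilon=c\,m^{-3/2}$ and $\lambda=m^{-3/2}$, this distance is a small constant $\sqrt{2c}$. Because $F$ is $1$-Lipschitz (after rescaling the total Lipschitz constant to $1$), $F(w_S)\ge F(\hat w)-\sqrt{2c}=\Omega(1)$ for $c$ small. This explains the matching $\Theta(m^{-3/2})$ scalings.

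\textbf{Main obstacle.} The hardest step is Step 2 in linear dimension. We must arrange that the empirical reward for leaving the origin beats the regularization cost $\lambda\|w\|^2\sim m^{-3/2}$. At the same time, the ERM must be pushed $\Theta(1)$ into directions with constant population penalty, all while the ERM of the regularized objective still has constant norm. We will first try a per-coordinate reward of size $\sim 1/m$ on uncovered coordinates and an empirical-cover count argument. The ERM then spreads mass roughly uniformly over $\Theta(m)$ uncovered coordinates with individual size $\sim 1/\sqrt m$. The loss must be designed, e.g. through a max over coordinates versus an $\ell_2$ aggregation, so that this spread mass still incurs $\Omega(1)$ population loss. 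Tuning the $m^{-3/2}$ balance here is the delicate point.
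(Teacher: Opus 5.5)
Your Step 3 matches the paper exactly: strong convexity makes the ERM unique, then $\|w_S-\hat w\|\le\sqrt{2\epsilon/\lambda}$ together with Lipschitzness handles every $\epsilon$-ERM and gives $\epsilon=\Theta(\lambda)$. The gap is in Steps 1--2, which carry the whole difficulty. They are not a construction yet, and the mechanisms you sketch cannot produce a unique $\Omega(1)$-bad minimizer.

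\textbf{Why linear rewards fail.} A convex loss cannot ``reward norm'' in a family of directions, since that would be concave. It can only reward through linear pieces, so the per-sample reward enters $F_S$ as an average $\frac1m\sum_j\delta_{z_j}$ of fixed vectors. Its correlation with every direction concentrates uniformly, since $\|\frac1m\sum_j\delta_{z_j}-\E\delta\|=O(1/\sqrt m)$. So it cannot be aimed in advance at a bad direction that the sample selects in hindsight.

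\textbf{Why the target cannot be a linear image of the sample.} The bad direction must be a point of a constant-separation code indexed by the sample, such as Feldman's $\overline{G(v_S^s)}$ or your unseen $v_{i^\star}$. Such a code cannot be an affine image of the sample's indicator vector. If $u_U=M\mathbf{1}_U+b$, then neighbouring sets differ by the single column $Me_i$. But $\E_U\|u_U\|^2\ge\frac14\sum_i\|Me_i\|^2$, so some column has length $O(1/\sqrt k)$ relative to some $\|u_U\|$, and that pair of normalized directions is not separated.

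\textbf{Why your fallback gives only $O(1/\sqrt m)$.} Spreading mass $\sim1/\sqrt m$ over uncovered slot coordinates, with each $z$ penalizing the $O(1)$ coordinates it covers, yields only $O(1/\sqrt m)$ excess risk. A $1$-Lipschitz penalty that reads $O(1)$ coordinates of size $O(1/\sqrt m)$ moves by only $O(1/\sqrt m)$ from its value at $0$. To get $\Omega(1)$, the penalty must be a Feldman-type max of correlations with spread unit vectors. The empirical loss must also be flat at $\hat w$, which forces those vectors to form a genuine code. So $\hat w$ must point at a codeword, not at $\overline{\mathbf{1}_U}$.

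\textbf{The missing idea: a convex link between two blocks.} The paper splits $w=(w^c,w^m)\in\reals^{2k}\times\reals^{k}$ with $k=2m$. It uses Feldman's function with a code indexed by sign patterns, $W_i=\{G(v):v(i)=1\}$, so the bad direction $\overline{G(v_S^s)}$ is determined by the sign pattern $v_S^s$ of the sample. The construction then works as follows.
\begin{enumerate}
\item A centered linear term $-\langle w^m,\delta_i\rangle$ writes the signed count vector $v_S$ into the message block. Since $\E[\delta_i]=0$, this costs nothing in the population.
\item A single sample-independent convex term $\max\{p(w),0\}$, with $p(w)=\max_{v\in\{-1,1\}^k}\{\gamma^m\langle v,w^m\rangle-\gamma^c\langle\overline{G(v)},w^c\rangle\}$, decodes the message. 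For small $\gamma^c$ the maximizer is $\mathrm{sign}(w^m)=v_S^s$, so $p$ has $w^c$-gradient $-\gamma^c\overline{G(v_S^s)}$.
\item With the regularizer $\frac{\lambda^c}{2}\|w^c\|^2$, the unique minimizer has $(w_S^\star)^c=\frac{\gamma^c}{\lambda^c}\overline{G(v_S^s)}$.
\item Setting $\zeta=\gamma^c/\lambda^c$ makes Feldman's term empirically flat at this point, while the population pays $\frac{\rho\gamma^c}{4\lambda^c}$.
\end{enumerate}

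\textbf{Where $m^{-3/2}$ actually comes from.} It is not a reward-versus-regularizer balance as you suggest. It comes from the decoding condition $\frac{2(\gamma^c)^2}{\lambda^c}<\frac{\gamma^m}{m\lambda^m}$. The message-side gap on the right is $O(m^{-3/2})$, because $\gamma^m<\frac1m$ is needed so that $\mathrm{sign}(w^m)=v_S^s$, and $\lambda^m\gtrsim m^{-1/2}$ is needed to keep $w^m$ in the ball. Together with $\gamma^c/\lambda^c=\Theta(1)$, this forces $\lambda=\lambda^c=\Theta(m^{-3/2})$. Your Step 3 then gives $\epsilon=\Theta(m^{-3/2})$.
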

This resolves an open question by \citet[Remark~3.4]{feldman2016generalization}.  Indeed, because any exact ERM is also an $\epsilon$-ERM, the result yields an instance where the (exact) ERM is \emph{unique}, due to strong convexity, and incurs a constant excess-risk gap. 
But more generally, every $\Theta(m^{-3/2})$-ERM solution incurs a constant population excess risk and therefore fails to generalize. This is the first results that applies to approximate ERMs with accuracy that is inverse polynomial in sample size.

Notice that by \cref{eq:erm_strongly_convex} the generalization error of an ERM over a $\lambda$-strongly convex function is given by $O(1/\lambda m)$. 
A natural question, is then to obtaine refined lower bounds that depend on the curvature. The following theorem provides such refinement:
\begin{theorem}
\label{thm:general_epsilon_erm_strongly_convex_gap_to_population}
Fix any $m\in \mathbb{N}$ and let $ m^{-3/2}\leq \lambda \leq m^{-1/2} $, then there exists $\epsilon = \Theta\left(\frac{1}{\lambda m^{3}}\right)$ and a finite instance space $\mathcal{Z}$, a distribution $D$ over $\mathcal{Z}$, and a $\lambda$-strongly convex, $1$-Lipschitz loss $f:\mathcal{W}_{d}\times\mathcal{Z}\to\mathbb{R}$ in dimension $d=6\cdot m$ such that, with probability at least $1/2$ over $S\sim D^m$, any $\epsilon$-ERM solution $w_S$ satisfies
\[
F(w_S)-\min_{w\in\mathcal{W}_d}F(w)
\ge
\Omega\!\left(\frac{1}{\lambda m^{3/2}}\right).
\]
\end{theorem}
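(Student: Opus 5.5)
The plan is to derive this theorem from the construction behind \cref{thm:spesific_epsilon_erm_strongly_convex_gap_to_population} by rescaling the domain, not by building a new instance. Write $\lambda_0=m^{-3/2}$ and let $f_0$ be the $\lambda_0$-strongly convex, $1$-Lipschitz loss from that theorem, with its distribution $D$. Set $\rho:=\lambda_0/\lambda=\frac{1}{\lambda m^{3/2}}\in[m^{-1},1]$ and define
\[ f(w,z):=\rho\, f_0(w/\rho,z). \]
Under the map $w\mapsto w/\rho$ the chain rule gives:
\begin{itemize}
\item subgradients are unchanged, $\partial_w f(w,z)=\partial f_0(w/\rho,z)$, so $f$ stays $1$-Lipschitz;
\item the strong convexity parameter becomes $\lambda_0/\rho=\lambda$;
\item every function value, and hence every optimization or excess-risk gap, is multiplied by $\rho$.
\end{itemize}
So if every $\epsilon_0$-ERM of $f_0$, with $\epsilon_0=\Theta(m^{-3/2})$, has excess population risk $\Omega(1)$, then every $\rho\epsilon_0$-ERM of $f$ has excess risk $\Omega(\rho)$. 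Here $\rho\epsilon_0=\Theta\bigl(\frac{1}{\lambda m^{3}}\bigr)$ and $\Omega(\rho)=\Omega\bigl(\frac{1}{\lambda m^{3/2}}\bigr)$, which are exactly the parameters in the statement.

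The steps, in order, are as follows.
\begin{enumerate}
\item Fix a formula for $f_0$ that makes sense on all of $\mathbb{R}^d$, not only on $\mathcal{W}_d$. I expect the construction to be of the form $\frac{\lambda_0}{2}\|w\|^2$ plus a maximum of finitely many affine or norm-type pieces, so this is natural.
\item Check that $f$ is convex and $\lambda$-strongly convex on the unit ball, as required.
\item Check that $f$ is $O(1)$-Lipschitz on the unit ball, which is now larger than the image $\rho\,\mathcal{W}_d$ of the original domain. The non-quadratic pieces keep gradient norm at most $1$. The quadratic piece contributes $\lambda_0\|w\|/\rho=\lambda\|w\|\le\lambda\le m^{-1/2}$. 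Dividing $f$ by a constant (which changes $\lambda$ and $\epsilon$ only by constants) restores $1$-Lipschitzness.
\item Relate the minimization problems on the two domains. Minimizing $F_S$ (or $F$) over $\mathcal{W}_d$ means minimizing $\rho F_{0,S}$ (or $\rho F_0$) over the ball of radius $1/\rho\ge 1$, not over the unit ball where \cref{thm:spesific_epsilon_erm_strongly_convex_gap_to_population} was proved.
\end{enumerate}

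Step~4 is the main obstacle. I will handle it with two claims about the construction.
\begin{enumerate}
\item With probability at least $1/2$, both the empirical minimizer and the population minimizer of $f_0$ over all of $\mathbb{R}^d$ lie in the unit ball, so the enlarged domain does not change $\min F_S$ or $\min F$.
\item Any $\epsilon_0$-ERM over the large ball already lies in the unit ball. To show this I use strong convexity of $F_{0,S}$ around its minimizer $\hat w$: the bound $F_{0,S}(w)-F_{0,S}(\hat w)\ge\frac{\lambda_0}{2}\|w-\hat w\|^2$ forces any $\epsilon_0$-ERM to satisfy $\|w-\hat w\|\le\sqrt{2\epsilon_0/\lambda_0}=O(1)$.
\end{enumerate}
If this $O(1)$ radius is not small enough, I will first shrink the original instance: replace $f_0$ by $\tfrac12 f_0(2w,\cdot)$, with the quadratic term kept explicit so that strong convexity is still $\lambda_0$ up to constants. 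This places everything relevant inside the ball of radius $1/2$ and leaves slack. Once all $\epsilon$-ERMs of $f$ are known to be images of $\epsilon_0$-ERMs of $f_0$ on the region where \cref{thm:spesific_epsilon_erm_strongly_convex_gap_to_population} applies, its lower bound transfers, multiplied by $\rho$.

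A fallback if the extension beyond the unit ball is problematic is to rerun the proof of \cref{thm:spesific_epsilon_erm_strongly_convex_gap_to_population} directly. I would place the hard instance's active coordinates at scale $\rho$ and set the regularization coefficient to $\lambda$. I would then follow each inequality, checking that the empirical-versus-population gaps scale like $\rho$ and the tolerated optimization error like $\rho\, m^{-3/2}$.
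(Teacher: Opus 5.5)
Your reduction idea (rescale the $\lambda_0=m^{-3/2}$ instance by $r:=\lambda_0/\lambda=\frac{1}{\lambda m^{3/2}}$; I write $r$ because the paper reserves $\rho$ for the code distance) is sound in principle, but Steps 3 and 4 do not go through as written.

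\textbf{Step 3.} The instance behind \cref{thm:spesific_epsilon_erm_strongly_convex_gap_to_population} is not $\frac{\lambda_0}{2}\|w\|^2$ plus $1$-Lipschitz pieces. Besides $\frac{\lambda^c}{2}\|w^c\|^2$ with $\lambda^c=\Theta(m^{-3/2})$, it contains a much stiffer term $\frac{\lambda^m}{2}\|w^m\|^2$ with $\lambda^m=\Theta(m^{-1/2})$. The construction needs this term, since the message-block minimizer $\frac{1}{\lambda^m}\bigl(\frac1m v_S-\gamma^m v_S^s\bigr)$ has norm of order $\frac{1}{\lambda^m\sqrt m}$. If you use the explicit formula on the ball of radius $1/r=\lambda m^{3/2}$, this term has gradient up to $\lambda^m/r=\Theta(\lambda m)$, i.e.\ $\Theta(\sqrt m)$ at $\lambda=m^{-1/2}$. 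Normalizing that away destroys both the strong-convexity parameter and the target bound.

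\textbf{Step 4.} Your fallback does not help. The map $\tfrac12 f_0(2w,\cdot)$ on the unit ball again needs $f_0$ on a ball of radius $2$. Being a pure rescaling, it also preserves the position of $\hat w$ and of the $\epsilon_0$-ERM ball relative to the region where \cref{thm:spesific_epsilon_erm_strongly_convex_gap_to_population} says anything. Your Claims 1 and 2 are true for the explicit instance: the empirical minimizer has norm at most $7/9$, the population minimizer is $0$, and for the $\epsilon_0$ used there $\sqrt{2\epsilon_0/\lambda_0}$ is a tiny constant. But they are not consequences of the theorem's statement.

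\textbf{Repair.} Both problems disappear with a generic extension, which turns your argument into a genuine black-box reduction. Let $\psi_z:=f_0(\cdot,z)-\frac{\lambda_0}{2}\|\cdot\|^2$, which is convex and $L$-Lipschitz on $\mathcal{W}_d$ with $L:=1+\lambda_0$, and set
\[
\tilde f_0(u,z):=\min_{w\in\mathcal{W}_d}\bigl\{\psi_z(w)+L\|u-w\|\bigr\}+\frac{\lambda_0}{2}\|u\|^2,\qquad f(w,z):=r\,\tilde f_0(w/r,z).
\]
Then $\tilde f_0$ agrees with $f_0$ on $\mathcal{W}_d$, is $\lambda_0$-strongly convex on $\mathbb{R}^d$, and is $(1+\lambda_0+\lambda)$-Lipschitz on the ball of radius $1/r$. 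Take $\|u\|>1$ and $\pi=u/\|u\|$. Projection onto a convex set gives $\|u-w\|\ge\|\pi-w\|$ for all $w\in\mathcal{W}_d$, hence $\tilde f_0(u,z)\ge\psi_z(\pi)+\frac{\lambda_0}{2}\|u\|^2>f_0(\pi,z)$. Three consequences follow:
\begin{itemize}
\item minima over the large ball coincide with minima over $\mathcal{W}_d$;
\item every $\epsilon_0$-ERM $u$ outside $\mathcal{W}_d$ projects to an $\epsilon_0$-ERM $\pi$ of the original problem;
\item $\tilde F_0(u)\ge F_0(\pi)$.
\end{itemize}
So the $\Omega(1)$ bound transfers with factor $r$, at $\epsilon=r\epsilon_0$, and the final normalization costs only constants.

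\textbf{Comparison with the paper.} The paper avoids extensions altogether. It proves \cref{lem:erm_strongly_convex_gap_to_population} for every $\lambda\le m^{-1/2}$ by setting $\lambda^c=\lambda$, $\gamma^c=\Theta(m^{-3/2})$ and $\zeta=\gamma^c/\lambda^c$. Only the code block shrinks, to scale $\frac{1}{\lambda m^{3/2}}$, while the message block and its stiff quadratic stay fixed. It then handles $\epsilon$-ERMs via $\|w_S-w_S^\star\|\le\sqrt{2\epsilon/\lambda}$ and Lipschitzness, at a cost of $7\sqrt{2\epsilon/\lambda}=O\bigl(\frac{1}{\lambda m^{3/2}}\bigr)$ when $\epsilon=\Theta\bigl(\frac{1}{\lambda m^{3}}\bigr)$. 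That is your fallback, provided only the code coordinates are rescaled. The repaired rescaling route is more modular, since it shows the statement is a formal consequence of the $\lambda_0$ case. The paper's route needs no extension argument.
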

Distinctively from \cref{thm:spesific_epsilon_erm_strongly_convex_gap_to_population}, \cref{thm:general_epsilon_erm_strongly_convex_gap_to_population} does not demonstrate a constant generalization error, instead it provides a fine-grained suboptimality bound for the ERM solution. For example, fix $\lambda=m^{-1.01}$, our result demonstrates a suboptimal statistical rate for $O(m^{-1.99})$-ERMs of $\Omega (m^{-0.49})$.
 
The condition $\lambda \ge m^{-3/2}$ is not restrictive. For $\lambda \le m^{-3/2}$,  \cref{thm:spesific_epsilon_erm_strongly_convex_gap_to_population} shows that there exists a $\lambda$-strongly convex (in fact $m^{-3/2}$-strongly convex) where the population excess risk is already a constant, which is the worst-case bound. 

\subsection{Applications and Extensions to Gradient Descent}
Since \cref{thm:spesific_epsilon_erm_strongly_convex_gap_to_population} applies to \emph{any} $\epsilon$-ERM, it also applies to Gradient Descent if it achieves $\epsilon$-accuracy. We summarize this in the following corollary, which gives a new lower bound for GD and exponentially narrows the gap between the upper bound in \cref{eq:gd_population_upper_bound} and the best previously known lower bounds \cite{amir2021never}.
\begin{corollary} \label{cor:lower_bound_gd_col}
Fix any $m\, ,T\in \mathbb{N}$, and $\eta>0$ such that $\eta T = \Omega(m^{3/2})$, then there exist a finite instance space $\mathcal{Z}$, a distribution $D$ over $\mathcal{Z}$ and a $1$-Lipschitz convex loss $f:\mathcal{W}_{d}\times\mathcal{Z}\to\mathbb{R}$ defined in $d=\Omega(m+T^{1/3})$, such that if we run GD for $T$ steps, with learning rate \(\eta\), then with probability at least $\tfrac12$ over $S\sim D^m$,
\[
F(w_{S}^{\mathrm{GD}})-\min_{w\in\mathcal{W}_d}F(w) = \Omega\!\left(1\right).
\]
\end{corollary}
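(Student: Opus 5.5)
The plan is to reduce the corollary to \cref{thm:spesific_epsilon_erm_strongly_convex_gap_to_population}. We show that, on a suitable instance, GD with $\eta T=\Omega(m^{3/2})$ outputs an $\epsilon$-ERM with $\epsilon=\Theta(m^{-3/2})$. The theorem then gives constant excess risk with probability at least $1/2$.

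The difficulty is that \cref{eq:gd_optimization_error} only yields accuracy $O(\eta+1/(\eta T))$. The $1/(\eta T)$ term is fine, but the $\eta$ term need not be small; $\eta$ could even be constant. So we do not apply the theorem's instance verbatim. Instead we exploit the $\lambda$-strong convexity with $\lambda=m^{-3/2}$, together with a modification of the instance that makes the iterates of GD converge regardless of step size.

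Concretely, we would proceed in three steps. First, take the instance $f$ of \cref{thm:spesific_epsilon_erm_strongly_convex_gap_to_population} in dimension $6m$. Second, when $\eta$ is large, handle the oscillation that the non-smooth part of $F_S$ causes. The intended route is the standard trick from the GD lower-bound literature (as in \cite{Livni2024,amir2021sgd}): append $\Theta(T^{1/3})$ extra coordinates to the loss. This accounts for $d=\Omega(m+T^{1/3})$. Third, verify that the augmented function is still $1$-Lipschitz and keeps its population properties, so the bad behaviour of $\epsilon$-ERMs of the original function is preserved.

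The extra coordinates would be chosen to be either inert for the generalization gap, or used to replace the empirical objective by a smoothed or Moreau-type surrogate that GD tracks accurately. With such a surrogate, the analysis of GD on a $\lambda$-strongly convex objective gives $F_S(\bar w_T)-\min F_S\lesssim 1/(\lambda\eta T)+\eta\cdot(\text{smoothness})$. A simpler alternative is to note that for $\eta\ge 1$ a single projected step already lands on the minimizer, since the minimizer has unit norm along the gradient direction. We would check which regime needs which argument. One could equally run the strongly convex averaging bound directly: the distance of the iterates to the unique minimizer $w^\star$ contracts, and suffix averages are within $O(m^{-3/2})$ in objective.

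Since every $\epsilon$-ERM has constant excess risk, an output that is an $\epsilon$-ERM is enough. If the averaging only gives closeness in the parameter, $\|\bar w_T-w^\star\|$ small, then $1$-Lipschitzness of $F$ transfers the result instead. The main obstacle is the large-$\eta$ regime, where the $\eta$ term in \cref{eq:gd_optimization_error} is not small. The first thing to try there is to show, using the explicit structure of the construction, that GD reaches the face containing $w^\star$ within $O(T)$ steps and then remains near it.
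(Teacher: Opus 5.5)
There is a genuine gap. Your reduction to \cref{thm:spesific_epsilon_erm_strongly_convex_gap_to_population} is the right one, and you correctly identify the $\eta$ term in \cref{eq:gd_optimization_error} as the obstacle. But for large $\eta$ you only list things to try, and the concrete suggestions fail.
\begin{itemize}
\item \emph{One projected step.} The minimizer $w^\star_S$ of \cref{cl:dynamics_erm} is an interior point: it has norm at most $4/9+1/3=7/9$ and $\nabla F_S(w^\star_S)=0$. A large projected step from $0$ therefore lands on the unit sphere, not on $w^\star_S$.
\item \emph{Strong convexity.} It does not rescue constant-step GD on a non-smooth objective. The standard analysis only places the iterates within squared distance of order $\eta L^2/\lambda$ of $w^\star_S$, and the averaged-iterate bound keeps an additive $\Theta(\eta)$ term. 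With $\lambda\approx m^{-3/2}$ this is vacuous unless $\eta\lesssim m^{-3/2}$.
\item \emph{Smoothing.} Replacing the objective by a smoothed or Moreau surrogate changes the problem. GD runs on $f$ itself, and the bad-ERM mechanism rests on the non-smooth Feldman term and the $\max\{p(w),0\}$ term. Smoothing would not remove the step-size issue for large $\eta$ anyway.
\end{itemize}

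The missing idea is that you never need GD to be an accurate ERM when $\eta$ is large; you split into cases.
\begin{itemize}
\item \emph{Large step size.} If $\eta\ge 1/(\eta T)$, i.e.\ $\eta\sqrt{T}\ge 1$, then the known lower bound of order $\eta\sqrt{T}$ for GD from \cite{Livni2024} (cf.\ \cref{eq:gd_gen}) already gives $\Omega(1)$ excess risk on its own instance. The $T^{1/3}$ in the stated dimension is inherited from that separate construction, not from a convergence-forcing gadget added to the $d=6m$ instance.
\item \emph{Small step size.} Otherwise $\eta<1/(\eta T)$, so the optimization error in \cref{eq:gd_optimization_error} is $O(1/(\eta T))$. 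Taking the constant hidden in $\eta T=\Omega(m^{3/2})$ large enough makes this at most the $\epsilon=\Theta(m^{-3/2})$ of \cref{thm:spesific_epsilon_erm_strongly_convex_gap_to_population}. GD's output is then an $\epsilon$-ERM and the theorem applies directly. In this regime $\eta\lesssim m^{-3/2}$, so your strong-convexity plus Lipschitz-transfer reasoning is also valid there.
\end{itemize}
The corollary only asks for \emph{some} instance for each $(m,T,\eta)$, so you may use a different construction in each regime and pad coordinates to reach the stated dimension.
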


To see \cref{cor:lower_bound_gd_col} follows from \cref{thm:spesific_epsilon_erm_strongly_convex_gap_to_population}, recall from \cref{eq:gd_optimization_error} that GD has an optimization error:

\[
F_S(w_S^{\mathrm{GD}})-\min_{w\in\mathcal{W}_d}F_S(w)
=
\Theta\!\left(\eta+\frac{1}{\eta T}\right).
\]
It is also known \cite{Livni2024} that the generalization error of GD is bounded by $\Omega(\eta \sqrt{T})$. As such, we can assume without loss of generality that $\eta < 1/\eta T$. 
In particular, the optimization error of GD is dominated by the term $O(\frac{1}{\eta T})$.
Thus, if $\eta T \ge \Omega(m^{3/2})$ we conclude that GD is an $O(m^{-3/2})$-ERM and the conclusion follows from \cref{thm:spesific_epsilon_erm_strongly_convex_gap_to_population}.
\qed
 
While the above corollary yields a lower bound in the regime $\eta T=\Omega(m^{3/2})$, it leaves open the intermediate horizons. In particular $\eta T<m^{3/2}$. We address this remaining regime via a direct analysis that also improves the dependence on the dimension:
\begin{theorem}
\label{thm:lower_bound_ball_d=m_all_averages}
Fix any
$m, T \in \mathbb{N}$,
$\eta\in(0,1)$,
satisfying $\eta T>\sqrt{m}$.
Then for $d=6\cdot m$ there exist a finite instance space $\mathcal{Z}$, a distribution $D$ over $\mathcal{Z}$ and a convex loss $f:\mathcal{W}_{d}\times\mathcal{Z}\to\mathbb{R}$ that is $1$-Lipschitz in $w\in \mathcal{W}_{d}$,
such that for $S\sim D^m$, if we run GD for \(T\) steps, with step size \(\eta\), then with probability at least $\tfrac12$ over $S$,
\[
F(w_{S}^{\mathrm{GD}})-\min_{w\in\mathcal{W}_d}F(w) = \Omega\!\left(\min\left\{\sqrt{\frac{\eta T}{m^{3/2}}},1\right\}\right).
\]
\end{theorem}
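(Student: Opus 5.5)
Our plan is to reuse the hard instance behind \cref{thm:general_epsilon_erm_strongly_convex_gap_to_population}, with its strong-convexity parameter $\lambda$ now treated as a free knob tuned to the horizon. The loss will be a $\lambda$-regularized instance in which, with probability at least $1/2$, every $\epsilon$-ERM with $\epsilon=\Theta(1/(\lambda m^3))$ has excess population risk $\Omega(1/(\lambda m^{3/2}))$. In a first step we match $\epsilon$ to the optimization accuracy of GD. By \cref{eq:gd_optimization_error}, GD is an $O(\eta+1/(\eta T))$-ERM. If $\eta\sqrt{T}$ is large, the bound of \citet{Livni2024} already gives excess risk $\Omega(\eta\sqrt T)$, which dominates the claimed rate in the relevant regime, so we reduce to $\eta\le 1/(\eta T)$. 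Then GD is an $O(1/(\eta T))$-ERM.

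We then choose $\lambda$ so that $1/(\eta T)\asymp 1/(\lambda m^3)$, i.e.\ $\lambda\asymp \eta T/m^3$. This gives excess risk $\Omega(1/(\lambda m^{3/2}))=\Omega(m^{3/2}/(\eta T))$. This is the wrong monotonicity in $\eta T$, so naive matching of the optimization error cannot be the argument. This tells us the direct analysis has to show something stronger.

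The actual plan is therefore to trace the GD dynamics on the construction rather than invoking the ERM theorem as a black box. We would take the construction at the smallest admissible curvature $\lambda=m^{-3/2}$. In this instance the overfitting direction is reached by slowly accumulating a drift of order $\eta$ per step along sample-specific coordinates. After $T$ steps the iterate has moved a distance $r$ in the overfitting subspace, where $r$ is governed by the balance $\lambda r\cdot r\sim \eta T\cdot(\text{gradient}^2)$, i.e.\ a potential/energy argument. The population penalty for a displacement $r$ along the "bad" directions is quadratic-then-linear, so we expect excess risk $\Omega(\min\{r,1\})$. This yields the target $\sqrt{\eta T/m^{3/2}}$ when $r^2\asymp \eta T\lambda\cdot$const with $\lambda=m^{-3/2}$ scaled appropriately. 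Concretely, we would: (i) show by induction that the iterates stay in a controlled region where the subgradient oracle returns a fixed pattern of gradients, so the projection is inactive or harmless; (ii) lower bound the norm of the bad component of $w_t$ by $\Omega(\sqrt{\eta t/m^{3/2}})$, using that the regularizing pull is at most $\lambda\|w\|$ while the push has magnitude $\Theta(1/m)$ per coordinate; (iii) note that averaging, or any suffix average, preserves a constant fraction of this once $t\ge T/2$; (iv) convert the displacement into population excess risk through the structure of the loss on unseen samples. The condition $\eta T>\sqrt m$ ensures the drift exceeds the scale at which the initial transient dominates.

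The main obstacle is step (ii), the invariant-region argument. GD on nonsmooth losses can oscillate, and we need the specific subgradients to keep pushing along overfitting coordinates without being cancelled, and without the projection onto the unit ball clipping the growth before $r$ reaches order $\sqrt{\eta T/m^{3/2}}$. We would first try a Lyapunov/potential argument on $\|w_t\|^2$ restricted to the bad subspace. If that fails, we would add a tie-breaking coordinate to the construction so that subgradient choices are forced.
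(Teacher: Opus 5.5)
\textbf{There is a genuine gap.} Your step (i) matches the paper's strategy: an induction keeps GD in a region where the oracle returns a fixed subgradient pattern, so the iterates have a closed form and the projection is inactive. But steps (ii)--(iv), where the rate $\sqrt{\eta T/m^{3/2}}$ must come from, are missing the mechanism, and as sketched they would not work.

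First, the square root does not come from an energy balance. Under a fixed subgradient pattern, deterministic GD moves linearly and then saturates under the quadratic term. Moreover, your heuristic $\lambda r^2\sim\eta T g^2$ does not yield the $r^2\asymp\eta T\lambda$ you then use. You also conflate two blocks. The message coordinates $w^m$ are pushed at rate $\Theta(1/m)$ per coordinate by $-\sdot{w^m}{\delta_i}$, but they are population-neutral because $\E[\delta_z]=0$. The population damage comes only from the code block $w^c$, which the link term $p$ pushes along $\overline{G(v_S^s)}$ with strength $\gamma^c$.

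In the paper the rate is a parameter-selection tradeoff. The gradient $\nabla p(w_t)=(-\gamma^c\overline{G(v_S^s)},\gamma^m v_S^s)$ holds only while $v_S^s$ remains the unique maximizer inside $p$. The message term's margin for $v_S^s$ is $2\gamma^m|w_t^m(i)|\gtrsim\min\{\gamma^m\eta t/m,\ \gamma^m/\sqrt m\}$, which saturates at order $m^{-3/2}$ because $\lambda^m\asymp 1/\sqrt m$. Against this, the code term can shift the competition by up to $2\gamma^c\|w_t^c\|\le 2(\gamma^c)^2\eta t$. Keeping the link stable for all $t\le T$ forces $\gamma^c\lesssim(m^{3/2}\eta T)^{-1/2}$. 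The code displacement, of order $\gamma^c\eta T$, is then exactly $\sqrt{\eta T/m^{3/2}}$, capped at a constant by $\gamma^c\le\lambda^c/\sqrt3$.

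Second, fixing the curvature at $\lambda=m^{-3/2}$ and relying on averaging to ``preserve a constant fraction'' fails in exactly the new regime $\sqrt m<\eta T<m^{3/2}$. Feldman's function is not a ``quadratic-then-linear'' penalty. It produces excess risk only when the output's correlation with $\overline{G(v_S^s)}$ exceeds the threshold $\zeta(1-\rho/2)$. Moreover, $\zeta$ must dominate $\|w_t^c\|$ along the whole trajectory, or else $\nabla h^\zeta_S(w_t^c)\neq 0$ and your invariant breaks. If $w_t^c$ grows linearly without saturating, as it does when $\lambda^c\eta T\lesssim 1$, the uniform average has about half the final norm. That is below $(1-\rho/2)\zeta$ because $\rho<1/2$, so you get no excess at all.

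The paper resolves this by tuning the code-block curvature to the horizon: $\lambda^c=4/(\rho\eta T)$ and $\zeta=\gamma^c/\lambda^c$. Then $w_t^c=\frac{\gamma^c}{\lambda^c}\bigl(1-(1-\eta\lambda^c)^{t-1}\bigr)\overline{G(v_S^s)}$ saturates early, and the (suffix) average has correlation at least $\frac{\gamma^c}{\lambda^c}\bigl(1-\frac{1}{\lambda^c\eta T}\bigr)$. This gives excess risk $\gamma^c\rho^2\eta T/32$. Without the argmax-stability constraint on $\gamma^c$ and the horizon-dependent $\lambda^c$, your plan does not yield the stated bound.
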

Together with the work of \cite{Livni2024}, we obtain a generalization lower bound for GD of $\Omega\left(\eta\sqrt{T} + \sqrt{\frac{\eta T}{m^{3/2}}}\right)$ that is complementary to the best known upper bound in \cref{eq:gd_population_upper_bound}. We leave it as an open problem to further close the gap.
 
When $\eta T < \sqrt{m}$, notice that \cref{thm:lower_bound_ball_d=m_all_averages} cannot hold. In particular we obtain that $\sqrt{\frac{\eta T}{m^{3/2}}} \ge \frac{\eta T}{m}$ which contradicts \cite{bassily2020stability}. However in this regime, by \cref{eq:gd_gen,eq:gd_population_upper_bound}, since $\eta T/m < 1/\eta T$, we already have tight characterization of the sample complexity:
\begin{equation}
\E_{S\sim D^m}\!\left[
F(w_S^{\mathrm{GD}})- \min_{w\in \mathcal{W}_d} F(w)
\right]
=
\Theta\!\left(
\eta\sqrt{T}
+\frac{1}{\eta T}
\right).
\end{equation} 

\section{Discussion}
This work explores the contribution of the loss landscape to generalization in Stochastic Convex Optimization. While at first sight a convex loss landscape sounds like an exemplary case of a well-behaved loss landscape, our investigation shows a much more intricate story.
 Indeed, convexity enables learning, and generalization happens at a much faster rate than expected from standard uniform convergence argument or capacity control. Nevertheless, to learn, we need to craft specially designed algorithms, as generic properties of the empirical risk minimizing solutions do not guarantee learning. The underlying question, then, remains: what enables learning of stochastically convex problems. It turns out that algorithmic properties such as implicit bias \cite{dauber2020can, vansoverrapid}, regularization \cite{amir2021sgd, amir2021never} or information-theoretic bounds \cite{livni2024information, attias2024information} also fall short to explain generalization in SCO. Overall, our result leave open several questions, as well as future directions of research that we next describe.

\paragraph{Approximate ERMs} We showed a construction where all $\epsilon$-ERM solutions with $\epsilon= O(m^{-3/2})$ fail to generalize. In any problem, there is always an $O(m^{-1/2})$-ERM solution that generalizes (e.g regularized-ERM and stable-GD).
 This leaves an open question whether for $m^{-3/2} <\epsilon < m^{-1/2}$ there exists a problem where every $\epsilon$-ERM solution fails.

\paragraph{Strongly Convex Objectives} We provided a new generalization lower bounds for strongly convex problems. \cref{thm:spesific_epsilon_erm_strongly_convex_gap_to_population,thm:general_epsilon_erm_strongly_convex_gap_to_population}, together, show that When $\lambda \le 1/\sqrt{m}$, the ERM will have generalization error of  $\Omega(\frac{1}{\lambda m^{3/2}})$. One can show that when $\lambda \ge 1/\sqrt{m}$, the sample complexity of the ERM solution is $\Theta\left(\frac{1}{\lambda m}\right)$. 

But for $\lambda \le 1/\sqrt{m}$, comparing with \cref{eq:erm_strongly_convex}, it remains open what is the tight bound. 
%
Specifically, if there exists a strongly convex construction such that the generalization error of the ERM is at least $\Omega(\frac{1}{\lambda m})$.
\paragraph{Smooth Objectives} Our result closes a gap, and provides a first polynomial lower bound for long training. A natural open problem is to further close this gap. Moreover, for smooth objectives, it is known \cite{hardt2016train} that gradient descent enjoys an improved generalization bound of 
$O(\eta T/m)$, improving over 
$\Theta(\eta\sqrt{T} + \eta T/m)$ \cite{bassily2020stability}. Consequently, extending our results to the smooth setting is an important challenge. At present, our lower bounds do not apply to smooth objectives.

\section{Technical Overview}
We next outline our main construction and proof techniques. The full proofs are provided in \cref{prf:ERMs,prf:GD}. Here we focus on providing the necessary technical background and intuition about the construction. Therefore, after providing necessary definitions, we outline a proof for \cref{thm:spesific_epsilon_erm_strongly_convex_gap_to_population} in the special case of an exact ERM, i.e. $\epsilon=0$. This is perhaps the most illustrative and interesting instance of the result. Also, to simplify the exposition, we will provide the construction in dimension $d=O(m^2)$ and not in dimension that is linear in $m$. These simplifications are meant to help avoid small technical details and highlight the main ideas in the argument.

\subsection{Feldman’s function}
\label{subsec:feldman_function}

Our construction shows that \emph{any} ERM fails. Therefore, as a first step, we require a construction that demonstrate that \emph{some} ERMs fail. This construction was provided by \citeauthor{feldman2016generalization}, and it is the first building block in our lower bound, and we begin this overview by a brief outline of the construction with the necessary adaptations for our setting:

\paragraph{Asymptotically good binary codes.} In general, Feldman's functions are defined by a code and they return the correlation between the input and a most correlated codeword within a random subset of codewords. The simplest construction uses a random code and random subsets. However, for our purposes, we need to work with a more carefully constructed family of codeword subsets. To this end, it is convenient to consider a full coding scheme, rather than just individual codewords. We therefore follow \citeauthor{feldman2016generalization}, who use \emph{asymptotically good codes}, which can also be computed efficiently.
In more detail, for $k\in \mathbb{N}$, an asymptotically good code with relative distance $\rho$ is a mapping:
\begin{equation} \label{eq:+-1_code_def}
    G_{k}:\{-1,1\}^k \longrightarrow \{-1,1\}^{2k}
\end{equation}

such that for any two distinct points in the range (referred to as codeword):
$G_k(u),G_k(v)\in\{-1,1\}^{2k}$,
their Hamming distance is at least $\rho\cdot 2k$. In particular, \[\sdot{\overline{G_k(u)}}{\overline{G_k(v)}}<1-\frac{\rho}{2},\] where we denote $\overline{x} = x/\|x\|$.
 It is known \citet{Justesen1972, Spielman1996}  that for some \emph{constant} $0<\rho<1/2$ an asymptotically good code exists for every $k$. Moreover, $G_k$ can be computed efficiently.  Throughout, we denote by $\rho$ the existing universal constant, and when $k$ is fixed and there is no room for confusion, we will denote by $G$ an $\rho$-asymptotically good code, and neglect the subscript $k$. 
 
\paragraph{Feldman’s function.}

Next, given a fixed asymptotically good code, we describe Feldman's function. 
For each coordinate $i\in[k] := \{1,. . ., k\}$, define the set
\begin{equation} \label{eq:w_i_notation_+-1}
W_i := \{ G(v)\ :\ v\in\{-1,1\}^k,\ v(i) = 1 \}.    
\end{equation}
For any $\zeta\in(0,1]$, we define the function:
$h^\zeta:\mathcal{W}_{2k}\times[k]\to\mathbb{R}$ by
\begin{equation}\label{eq:feldman}
h^{\zeta}(w,i)
:=
\max\!\left\{
\zeta\!\left(1-\frac{\rho}{2}\right),
\max_{x\in W_i} \sdot{w}{\overline{x}}
\right\}.
\end{equation} 
where $\rho$ is the universal constant defined above. 
By construction, for every $i\in[k]$, the function $h^{\zeta}(\cdot,i)$ is convex
and $1$-Lipschitz over the Euclidean unit ball $\mathcal{W}_{2k}$.
Given a distribution $D$ over $[k]$, we denote the corresponding population objective by
\[
h^{\zeta}_D(w):=\E_{z\sim D}[h^{\zeta}(w,z)],
\]
and similarly, given a sample $S=(z_1,\dots,z_m)\sim D^m$, we write for the empirical distribution:
\[
h^{\zeta}_S(w):=\frac{1}{m}\sum_{i=1}^m h^{\zeta}(w,z_i)
\]
Notice that, given a random uniform finite sample $z_1,\ldots, z_m\in [k]^m$, when $m\leq k/2$, there exists $G(v)\notin \bigcup_{j=1}^{m} W_{z_j}$ such that for every $i\notin \{z_1,\ldots, z_m\}$, $G(v)\in W_i$. In particular, set $\zeta=1$, by the property of the code, we have that $h^{\zeta}_S(\overline{G(v)})\leq (1-\rho/2),$ but \[h^\zeta_D(\overline{G(v)})\geq \frac{1}{2}\left(1-\frac{\rho}{2}\right) + \frac{1}{2} \geq h^{\zeta}_S(\overline{G(v)})+\frac{\rho}{4}.\] As such, with appropriate choice of $\zeta$, Feldman's function demonstrates the existence of an ERM that fails to generalize. However, also notice that $h^{\zeta}_D(0)=h^{\zeta}_S(0)=\min_{w} h^{\zeta}_S(w)$ is a valid ERM that \emph{does} generalize.

\subsection{A simple proof for exact ERMs in quadratic dimension}
\label{subsec:construction_overview}

As discussed, we next outline the construction and the main proof ideas behind
\cref{thm:spesific_epsilon_erm_strongly_convex_gap_to_population}. 

To simplify the presentation, we focus, only in the presentation, on the case of an exact ERM, and we prove the result for $d=\Omega(m^2)$ which simplifies some of the calculations. The full proof is deferred to \cref{prf:ERMs}.

Before we delve into the proof let us briefly outline the challenges. As discussed, Feldman's function already has bad ERMs, however it also has \emph{good} ERMs, in particular, 0 is a good ERM. One idea to mitigate this, is to add a noisy linear term to the function. Consider then a function of the form
\[ h^\zeta(w,i) + \sdot{w}{\delta_i},\]
where $\delta_i$ is some noise. This linear term already guarantee that $0$ is no longer the minimizer of any random empirical sample (recall that we now consider \emph{exact} ERMs). Further, if by some \emph{luckiness} the empirical noise vector $\frac{1}{m}\sum \delta_z$ is sufficiently correlated with the \emph{bad} ERM, which we will denote by $b_S$, then we obtain a bad unique minimizer on the sphere.
Unfortunately though, such strategy cannot work. No matter how the distribution over $\delta_i$ is defined, the correlation between the empirical term $\frac{1}{m}\sum \delta_z$ and each potentially bad ERM $b_S$ is concentrated around its \emph{true} correlation \emph{uniformly}. This follows from a standard uniform convergence argument for linear losses. Now, we cannot have a distribution where the noise is correlated with all possible $b_S$, and because $b_S$ depends on the sample, we cannot apriori choose the noise to be correlated with the bad ERM in hindsight.
 
As such, we take a different approach. We still use a noise vector that depends on the sample. Then, we use a \emph{link} function that identifies the sample from the noise, and translates it to a bad ERM. This link function is the crux of our proof, and the fact that we can build such a convex link function is what enables the construction. 
In more details, our construction for now resides in dimension $d=6m^2$ (again, in the full proof we assume $d=O(m)$), and we decompose the parameter as
\[
w=(w^c,w^m)\in\reals^{2k}\times\reals^{k},
\quad k=2m^2
\]
$w^m$ can be thought of as a \emph{message} term and it will encode the sample via noise. Then, $w^c$ can be thought of as a \emph{code} term that will contain a bad ERM for Feldman's function at the minimum.  
For every $i\in [k]$, set: 
\[ e_i(j) = \begin{cases} 0 &j\ne i \\ 1 &j=i \end{cases},\quad v_S= \sum_{z\in S} e_z,\quad v_S^{s}(i)=\begin{cases} -1 & i\in S\\ +1 &i\notin S\end{cases} 
.\]
And, we assume $i$ are chosen randomly and uniformly. 
%
We let $\zeta=1/2$, and we consider the following function, and the corresponding empirical risk:
\begin{align*}  f(w,i)
=
h^\zeta(w^c,i)
-\sdot{w^m}{e_i}
+\max\{p(w),0\} +\frac{1}{2\sqrt{m}} \|w^m\|^2+ \gamma \|w^c\|^2
\end{align*}
\begin{align*}
 F_S(w)= h^\zeta_S(w^c) &-\frac{1}{m} \sdot{w^m}{v_S} + \max\{p(w),0\} +\frac{1}{2\sqrt{m}} \|w^m\|^2+ \gamma \|w^c\|^2
\end{align*}
where 
\[
p(w)
=
\max_{\{S'\in \mathcal{S}\}}
\Bigl\{
\frac{1}{2m}\sdot{v_{S'}}{w^m}
- \gamma  \sdot{\overline{G(v_{S'}^s)}} {w^c}
\Bigr\},
\]
and $\mathcal{S}$ is the set of all samples that don't contain collisions, i.e. $S\in \mathcal{S}$ if for every $z_i,z_j\in S$, $z_i\ne z_j$.
Now, throughout our analysis, we will assume that there are no collisions in the instance sample $S$, i.e. $S\in \mathcal{S}$. This is where we use the fact that $d=O(m^2)$, and we can assume this event happens with constant probability due to (lack of) birthday paradox. It helps simplify the analysis because we can assume that  
 \[ \|v_S\|=\sqrt{m}.\]
Notice, because of the linear term, $w^m$ is incentivized to be correlated with $v_S$. The function $p$, though, when the input is \emph{most} correlated with $v_S$, penalizes such correlation but also incentivizes correlation with $\overline{G(v_S^s)}$ in the code term. This is what we will need for overfitting.
But to reason that the most correlated vector is $v_S$ we will need the first term (i.e. the message term) in $p$ to be the dominant term. In particular, assume the parameter $\gamma$ to be sufficiently small. Smaller than any potential loss we might encounter when choosing $S'\ne S$. I.e. 
\begin{equation} \label{eq:smallgamma}
    \gamma < \min_{S'\neq S}\left\{\frac{1}{4m\sqrt{m}}\Big(\|v_S\|^2-\sdot{v_S}{v_{S'}}\Big)\right\}.
\end{equation}
That $\gamma$ can be chosen to be positive follows from Cauchy Schwartz inequality, and finiteness of the possible samples. 
Importantly, we only care in this proof overview for the exact ERM and we don't care about the strong convexity. When these factors come into play, we will need to choose $\gamma$ a little bit more carefully, to ensure large strong convexity and extension to approximate ERMs.
Now, let us guess that at the minimizer $w^\star$, we first have that $\nabla h_S((w^\star)^c)=0$ and that $v_S$ is indeed the maximizer of the $p$ term and, in particular:

\[ \nabla p(w^\star)= \left( - \gamma \overline{G(v_S^s)},\; \frac{1}{2m} v_S\right).\]
Deriving $F_S$, equating to zero, we obtain a closed form solution of our candidate $w^\star$:
\[ 0 = -\frac{1}{m}v_S+\frac{1}{2m}v_S +\frac{1}{\sqrt{m}}w^m\Rightarrow (w^\star)^m = \frac{1}{2\sqrt{m}}v_S.\]
And,
\[ 0 = -\gamma \overline{G(v_S^s)} +2\gamma w^c \Rightarrow (w^\star)^c =\frac{1}{2}\overline{G(v_S^s)}.\]
To conclude that our candidate is indeed the minimizer, we need to verify that our assumptions indeed hold. If they hold, then by first order optimality conditions $w^\star$ is indeed the minimizer.
Indeed, we have that $\nabla h_S^\zeta(\frac{1}{2}\overline{G(v_S^s}))=0 $, by our choice $\zeta=1/2$, the fact that $v_S^s$ is such that $G(v_S^s)\notin W_{z_i}$ for any $z_i\in S$ and the properties of asymptotically good code.

Next, to see that $v_S$ is indeed the maximizer, by Cauchy Schwartz for every $S'\ne S$:
\begin{align*}
\frac{1}{2m}\sdot{v_{S'}}{(w^\star)^m}
-\gamma\sdot{\overline{G(v_{S'}^s)}}{(w^\star)^c}
\le
\frac{1}{4m^{1.5}}\sdot{v_{S'}}{v_S}
+\frac{\gamma}{2} 
<
\frac{1}{4m^{1.5}}\sdot{v_S}{v_S}
-\frac{\gamma}{2} 
= p(w^\star),
\end{align*}
Where the second inequality is due to \cref{eq:smallgamma} and writing $\frac{\gamma}{2}=\gamma- \frac{\gamma}{2}$.
\subsection{Gradient Descent}
We now illustrate how the techniques we developed can be further extended to provide generalization bounds for Gradient Descent. Again, we will simplify here and provide a construction in $d=m^2$. Moreover, we will only illustrate how our argument works when $\eta T$ diverges, without the achieving the explicit $\Omega(\sqrt{\eta T/m\sqrt{m}})$ bound stated in \cref{thm:lower_bound_ball_d=m_all_averages}.

Notice that the construction in \cref{subsec:construction_overview}, being strongly convex, already establishes such a bound as $T$ diverges. So, we only focus on showing how our argument can be applied to the trajectory, as this forms the basis of our final proof presented in \cref{prf:GD}.
 
We work with the same notations as before, and our function takes a slightly different form:
\begin{equation*}
f(w,i)
= h^{\zeta}(w^c,i)-\sdot{w^m}{e_i}+\max\{p(w),0\}+\gamma^c\|w^c\|_2^2,
\end{equation*}
where we now don't regularize the message term. The function $p$ also takes a slightly different form:
\[
p(w)
=
\max_{\{S'\in \mathcal{S}\}}
\Bigl\{
\gamma^m\sdot{v_{S'}}{w^m} 
- \gamma^c \sdot{\overline{G(v_{S'}^s)}}{w^c}
\Bigr\},
\]
where we choose $\gamma^m = \frac{1}{m}-\frac{1}{2\eta T\sqrt{m}}$.
The difference in the analysis from before, is that now we consider the iterates of Gradient Descent instead of the minimizer. But we follow a similar pattern. First, observe that $p(w_0)=p(0)=0$ which entails that $0\in \partial \max\{p(w_0),0\}$.  Next, we assume that the trajectory follows a rule such that for every $t\geq1$:
\[ \nabla p(w_t) = \left(  - \gamma^c \overline{G(v^s_S)}, \gamma^m v_S \right),\]
As before, this amounts to $v_S$ being the maximizer in the expression of $p$.
Then, by induction, one can show that for $t\geq1$:
\begin{align*} w_{t+1}^m &= w_t^m - \eta\partial_{w^m} F_S(w_t) \\
&=  \left(\frac{\eta}{m} + \frac{t-1}{2T\sqrt{m}}\right)v_S+\frac{\eta}{m} v_S - \eta \gamma^m v_S\\
&=  \left(\frac{\eta}{m} + \frac{t-1}{2T\sqrt{m}}\right)v_S  +\frac{1}{2T\sqrt{m}} v_S \\
&=   \left(\frac{\eta}{m} + \frac{t}{2T\sqrt{m}}\right)v_S
,\end{align*}
and
\begin{align*} w_{t+1}^c &= w_t^c - \eta\partial_{w^c} F_S(w_t) \\
& =  w_t^c +\gamma^c\eta\overline {G(v^s_S)}-2\gamma^c \eta w_t^c\\
&= (1-2\eta\gamma^c) w_t^c +\gamma^c\eta\overline {G(v^s_S)}\\
&= (1-2\eta \gamma^c)\frac{\left(1 -(1-2\eta\gamma^c)^{t-1}\right)}{2}\overline{G(v^s_S)}+\gamma^c\eta\overline {G(v^s_S)}\\
&= \frac{1 -(1-2\eta\gamma^c)^{t}}{2}\overline{G(v^s_S)}
\end{align*}
Now, again, notice that if we choose $\gamma^c$ sufficiently small. In particular
\[ \gamma^c \ll \frac{\eta\gamma^m}{m}\min_{S'\ne S}\left( \|v_S\|^2- \sdot{v_S'}{v_S}\right),\]
then as before the left term of $p$ dominates, and our induction hypothesis holds.

Then, when $T$ is sufficiently large, roughly $T\approx 1/(2\eta\gamma^c)$ then $w^c_T = O(\overline{G(v_S^s)})$ which by the same argument as before, will constitutes a bad minimizer for correct choice of $\zeta$.

\section{Proof of \cref{thm:spesific_epsilon_erm_strongly_convex_gap_to_population,thm:general_epsilon_erm_strongly_convex_gap_to_population}} \label{prf:ERMs}

\cref{thm:spesific_epsilon_erm_strongly_convex_gap_to_population,thm:general_epsilon_erm_strongly_convex_gap_to_population} are immediate corollaries of the following technical Lemma. 

\begin{lemma}
\label{lem:erm_strongly_convex_gap_to_population}
There exists a constant $0<\rho<1/2$ (as described in  \cref{subsec:feldman_function}) such that:
Fix any $m \in \mathbb{N}$, and let $\lambda \leq \frac{1}{\sqrt{m}}$, then for $d=6m$ there exist a finite instance space $\mathcal{Z}$, a distribution $D$ over $\mathcal{Z}$ and a $\lambda$-strongly convex loss $f:\mathcal{W}_{d}\times\mathcal{Z}\to\mathbb{R}$ that is $7$-Lipschitz in $w\in \mathcal{W}_{d}$, such that for $S\sim D^m$, with probability at least $\tfrac12$ over $S$, any $\epsilon$-ERM solution $w_S$ satisfies
\[
F(w_S)-F(0)
\ge
\min\left\{\frac{\rho}{72\lambda m^{1.5}},\frac{\rho}{12}\right\}   -7\sqrt{\frac{2\epsilon}{\lambda}}.
\]
 
\end{lemma}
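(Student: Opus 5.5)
We will follow the construction of \cref{subsec:construction_overview}, adapted to dimension $d=6m$, and replace the exact first-order argument by a strong-convexity argument. The plan is to exhibit the unique minimizer $w^\star_S$ of $F_S$ explicitly, show that it is bad, and then transfer badness to every $\epsilon$-ERM. Write $w=(w^c,w^m)\in\reals^{2k}\times\reals^{k}$ with $k=2m$, and let the instance space be $[k]$ with the uniform distribution. The loss is
\[
f(w,i)=h^\zeta(w^c,i)-\sdot{w^m}{e_i}+\max\{p(w),0\}+\frac{\lambda}{2}\|w^m\|^2+\gamma\|w^c\|^2 ,
\]
where $\gamma\ge\lambda/2$ is tuned so that the function is $\lambda$-strongly convex. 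The $p$-term is as before, with the message coefficient rescaled to $a\sdot{v_{S'}}{w^m}$.

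Since $k=2m$, collisions are no longer rare. We therefore let the index set $\mathcal{S}$ range over all multisets of size $m$, and let $v_S$ be the count vector. The sign vector $v^s_S$ is $-1$ on indices appearing in $S$ and $+1$ elsewhere. With probability at least $1/2$ (in fact always, since $|S|\le m=k/2$), at least half the coordinates are unseen. Hence $G(v^s_S)\in W_i$ for every unseen $i$ and $G(v^s_S)\notin W_{z}$ for every $z\in S$, exactly as in \cref{subsec:feldman_function}.

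The candidate minimizer comes from solving the first-order conditions, assuming $S$ attains the max in $p$ and $h_S^\zeta$ is flat at the candidate. This gives
\[
(w^\star)^m=\frac{(1/m-a)}{\lambda}v_S,\qquad (w^\star)^c=\frac{1}{2}\overline{G(v^s_S)}\cdot c ,
\]
where the scale $c$ is chosen so that $c\le 1$ and the point lies in the unit ball. We pick $a$ so that $\|(w^\star)^m\|\le 1/2$. Since $\|v_S\|\le m$ in the worst case and $\sqrt m$ typically, and $\lambda\ge$ something is not assumed, this is where the clipping $\min\{\cdot,\rho/12\}$ enters. Concretely, take $1/m-a=\min\{\lambda/(2\|v_S\|),\ \cdot\}$; any such choice should work up to constants.

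Next, we verify that $S$ is the strict argmax in $p$ at $w^\star$. This uses Cauchy--Schwarz together with $\|v_S\|^2-\sdot{v_S}{v_{S'}}\ge 1$ for integer count vectors with $S'\ne S$ and $\|v_{S'}\|=\|v_S\|$. When the norms differ, the same conclusion follows from the strict inequality $\sdot{v_S}{v_{S'}}<\|v_S\|^2$ or from a compensating normalization of $v_{S'}$. The choice of $\gamma$ must then be small relative to $a$ times this integer gap.

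The bad population value follows as in Feldman's argument. Population-wise, half of the $i$ have $h^\zeta$ equal to $\sdot{w^c}{\overline{G(v^s_S)}}$ at the candidate. The linear term has expectation $-\sdot{w^m}{\mathbf 1}/k$, which is small, while the $p$- and regularization terms are nonnegative. Comparing with $F(0)=\zeta(1-\rho/2)$ yields an excess of order $\rho\cdot\|(w^\star)^c\|$. Balancing $\gamma$ (which controls $\|(w^\star)^c\|$ through the tradeoff with $p$) against $\lambda$ gives the stated $\min\{\rho/(72\lambda m^{1.5}),\rho/12\}$.

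To pass to approximate ERMs, note that $\lambda$-strong convexity gives $\|w_S-w^\star\|\le\sqrt{2\epsilon/\lambda}$ for any $\epsilon$-ERM $w_S$. The loss is $7$-Lipschitz: each of the five terms contributes at most $1$ or $2$ in norm on the unit ball, provided $a\le 1/m$ and $\|v_{S'}\|\le m$. Hence $F(w_S)\ge F(w^\star)-7\sqrt{2\epsilon/\lambda}$.

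The main obstacle is the argmax verification in dimension $6m$ with repeated samples. It requires a uniform positive gap between $v_S$ and all competitors $v_{S'}$ at the candidate point, while keeping $\gamma$ large enough to give the $1/(\lambda m^{1.5})$ excess. If unnormalized count vectors fail, the first thing to try is restricting $\mathcal{S}$ to normalized $\overline{v_{S'}}$. This gives gap $1-\sdot{\overline{v_S}}{\overline{v_{S'}}}\gtrsim 1/m^2$.
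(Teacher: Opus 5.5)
\textbf{Main gap: the argmax step.} Your proposal breaks at the step you flag as the main obstacle, and it is not a technicality. With $p$ indexed by multisets $S'$ and unnormalized count vectors, your candidate has $w^{\star m}$ a positive multiple of $v_S$. So you need $\sdot{v_{S'}}{v_S}<\|v_S\|^2$ for every $S'\neq S$. Write $m_i$ for the multiplicity of $i$ in $S$. Then $\max_{S'}\sdot{v_{S'}}{v_S}=m\max_i m_i\ge\sum_i m_i^2=\|v_S\|^2$, strictly unless all nonzero $m_i$ are equal. For example, if $S$ contains one index $j$ twice and $m-2$ other indices once, then $\|v_S\|^2=m+2$, whereas $S'=(j,\dots,j)$ gives $\sdot{v_{S'}}{v_S}=2m$. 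With $k=2m$ such collisions occur with probability tending to one. So your claim that the unequal-norm case follows from $\sdot{v_S}{v_{S'}}<\|v_S\|^2$ is false, and $\nabla p$ at your candidate selects the wrong codeword.

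\textbf{The normalized fallback.} It does not close the gap as written. Use the paper's notation: $\lambda^m,\lambda^c$ for the curvatures of the two blocks and $\gamma^c$ for the code coefficient in $p$. To keep $w^{\star m}=\frac{1}{\lambda^m}\bigl(\frac{\|v_S\|}{m}-a\bigr)\overline{v_S}$ positively aligned with $v_S$ you need $a<\|v_S\|/m=O(m^{-1/2})$. With your angular margin $m^{-2}$, the certified message advantage is then only $O(m^{-5/2})$. The argmax condition forces $(\gamma^c)^2/\lambda^c=O(m^{-5/2})$. Since $\lambda^c\ge\lambda$, the certified excess $\asymp\rho\gamma^c/\lambda^c$ is at most of order $\rho(\lambda m^{5/2})^{-1/2}$. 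At $\lambda=m^{-3/2}$ that is $\rho/\sqrt m$ rather than a constant. The margin is in fact $\Omega(1/m)$ for typical $S$ (use $v_{S'}-v_S\perp\mathbf 1$ and $\|v_S\|^2=O(m)$). But you would still have to prove this, and decouple $\lambda^m$ from $\lambda$: a fixed $a$ cannot keep $\frac{1}{\lambda}\bigl(\frac{\|v_S\|}{m}-a\bigr)\in(0,1]$ with constant probability when $\lambda$ is far below the $\Theta(1/m)$ fluctuations of $\|v_S\|/m$. You would also have to actually carry out the balancing that you only assert.

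\textbf{What the paper does instead.} The missing idea is to index $p$ by sign vectors $v\in\{-1,1\}^k$ with codeword $G(v)$, and to use centered noise $\delta_i=\frac1m\mathbf 1-2e_i$, so that $v_S(i)=1-2m_i$ on $S$ and $1$ off $S$. The minimizer's message block $\frac{1}{\lambda^m}\bigl(\frac1m v_S-\gamma^m v^s_S\bigr)$ then has sign pattern exactly $v^s_S$. Every coordinate has magnitude at least $\frac{1}{\lambda^m}\bigl(\frac1m-\gamma^m\bigr)$, because $|v_S(i)|\ge 1$. Hence $\sdot{v}{w^m}$ is maximized over the cube at $v^s_S$, with a margin per flipped coordinate, however many collisions $S$ has. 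The paper takes $\gamma^m=\frac{1}{2m}$ and $\lambda^m$ of order $1/\sqrt m$, independent of $\lambda$. The margin $\gamma^m/(m\lambda^m)$ is then of order $m^{-3/2}$. It dominates the code perturbation $2(\gamma^c)^2/\lambda^c$ for $\gamma^c=\min\{\frac{1}{18m^{1.5}},\frac{\lambda}{3}\}$ and $\lambda^c=\lambda$. This gives exactly $\frac{\rho\gamma^c}{4\lambda}=\min\{\frac{\rho}{72\lambda m^{1.5}},\frac{\rho}{12}\}$. Note that the cap $\rho/12$ comes from $\|w^c\|=\gamma^c/\lambda^c\le\frac13$, not from $\|v_S\|$.

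\textbf{Secondary errors.} Centering also removes an error in your population step. Your uncentered term is $-\sdot{w^m}{\mathbf 1}/k=-\frac{1/m-a}{2\lambda}$. It is of order $m^{-1/2}$ when $\|w^{\star m}\|$ is a constant, as you intend. That is far above the target $\rho/(72m)$ at $\lambda=m^{-1/2}$. It is only offset by $p(w^\star)$, which you discarded as merely nonnegative. Separately, $a$ may not depend on $\|v_S\|$, since the loss must be fixed before sampling.

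Your $\epsilon$-ERM transfer via strong convexity and your Lipschitz count are fine and agree with the paper.
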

 
To see that \cref{thm:spesific_epsilon_erm_strongly_convex_gap_to_population} follows, first we can assume w.l.o.g that $m\ge 6$ (otherwise the generalization error is bounded by a standard information-theoretic argument). Apply the Lemma with a choice of $\lambda = \frac{7}{ m^{1.5}}\le \frac{1}{\sqrt{m}}$, $\epsilon = \frac{\lambda \,\rho^2}{4\cdot 72^2 \cdot7^4 }$. Then we obtain a $7$-Lipschitz, $7/m^{1.5}$-strongly convex function such that
\[ F(w_S)-F(0) = \Omega(1).\]
Dividing by $7$ yields the result.

Similarly, \cref{thm:general_epsilon_erm_strongly_convex_gap_to_population} follows by considering $\epsilon = \frac{\rho^2}{4\cdot72^2\cdot7^2\cdot\lambda  m^3}$ , for every $\frac{7}{m^{1.5}}\leq\lambda < 7/\sqrt{m}$.

As such, we are left with proving \cref{lem:erm_strongly_convex_gap_to_population}

\subsection{Proof of \cref{lem:erm_strongly_convex_gap_to_population}} 

For a fixed $m\in \mathbb{N}$, set $k=2m$ and $d=3k \in \Theta(m)$, and let $\lambda\leq \frac{1}{\sqrt{m}}$.
We will decompose the parameter vector and denote \[w=(w^c,w^m)\in\mathbb{R}^{2k+k},\] where $w^c\in\mathbb{R}^{2k}$ and $w^m\in\mathbb{R}^k$. For our construction, for each $i\in [k]$ we define the following loss function, parameterized by $\zeta, \gamma^c, \lambda^{m}, \lambda^{c}\le 1$, and $\gamma^{m}\leq\frac{1}{\sqrt{k}}$ 
\begin{equation}
\label{eq:loss_def_erm}
f(w,i)
:= h^{\zeta}(w^c,i)-\sdot{w^m}{\delta_i} + \max\{p(w),0\} +\frac{\lambda^{m}}{2}\|w^m\|^2+\frac{\lambda^{c}}{2}\|w^c\|^2,
\end{equation}
where, $h^{\zeta}(w,i)$ is Feldman's function in $\reals^{2k}$, with an asymptotically good code $G=G_{k}$ as defined in \cref{eq:feldman}:
\[
\delta_i(j) = \begin{cases}
    1/m-2 & i = j \\
    1/m & i\ne j \\
\end{cases},
 \qquad p(w)
:=
\max_{v\in \{-1,1\}^k} \left\{
\gamma^m  \sdot{v}{w^m}
-\gamma^c \sdot{\overline{G(v)}}{w^c} 
\right\}.
\]
One can observe that each term is convex, moreover $\|\delta_i\|\leq 2$ and $\gamma^m\sqrt{k}\leq 1$, so overall $f$ is $7$-Lipschitz in the unit ball, and that $f$ is $\alpha$-strongly convex for $\alpha:=\min\{\lambda^{m}, \lambda^{c}\}$. Then, we let $D$ be a uniform distribution over $[k]$.
 
We set for a sample $S$ the vectors:
\begin{equation*} 
     v_S(i)=  \sum_{j=1}^m \delta_{z_j}(i)= 
 \begin{cases}
   1 - 2|\{j: z_j= i\}| & i\in S \\ 1 & i\notin S
\end{cases}
\qquad;\qquad 
v^{s}_S(i) = \begin{cases} -1 & i\in S \\ 1 & i\notin S\end{cases}
\end{equation*}

We will now use the following concentration lemma to bound $\|v_S\|$.
\begin{lemma}[\cite{boucheron2003concentration}, example 6.13]\label{lem:boucheron} Let $X_1,\ldots, X_m$ be i.i.d random variables with $E[X_i]= \bar X$ such that $\|X_i\|\le C$ w.p. $1$, for some constant $C>0$. Then for all $t> \frac{C}{\sqrt{m}}$:
\[ \P\left(\|\frac{1}{m}\sum_{i=1}^m X_i - \bar X\|> t\right) \le e^{-2\left(\frac{mt^2}{C^2}-1\right)}\] 
\end{lemma}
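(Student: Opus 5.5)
The plan is to follow the standard bounded-differences route, which is the content of the cited example in \cite{boucheron2003concentration}. Define the random variable
\[
Z := Z(X_1,\ldots,X_m) := \Bigl\|\frac{1}{m}\sum_{i=1}^m X_i - \bar X\Bigr\|.
\]
The argument has two parts. First, bound the mean $\E Z$ by a term of order $C/\sqrt{m}$. Second, show that $Z$ concentrates around its mean, using McDiarmid's inequality.

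\textbf{Mean.} By Jensen, $\E Z\le \sqrt{\E Z^2}$. Expanding the square, the cross terms $\E\sdot{X_i-\bar X}{X_j-\bar X}$ vanish for $i\ne j$ by independence. This gives
\[
\E Z^2=\frac{1}{m}\E\|X_1-\bar X\|^2 = \frac{1}{m}\bigl(\E\|X_1\|^2-\|\bar X\|^2\bigr)\le \frac{C^2}{m},
\]
so $\E Z\le C/\sqrt{m}$. This is exactly why the statement requires $t> C/\sqrt{m}$: only then does the event $Z>t$ describe a deviation above the mean.

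\textbf{Bounded differences.} Suppose we replace a single $X_i$ by an independent copy $X_i'$. By the triangle inequality, $Z$ changes by at most $\|X_i-X_i'\|/m\le 2C/m$. McDiarmid's inequality with $c_i=2C/m$ then yields, for $s>0$,
\[
\P(Z-\E Z> s)\le \exp\!\Bigl(-\frac{2s^2}{\sum_i c_i^2}\Bigr)=\exp\!\Bigl(-\frac{m s^2}{2C^2}\Bigr).
\]
Take $s=t-\E Z\ge t-C/\sqrt{m}>0$. This gives a tail of the form $\exp\bigl(-c\,m(t-C/\sqrt m)^2/C^2\bigr)$. Using $(t-a)^2\ge t^2/2-a^2$ with $a=C/\sqrt m$, this becomes a bound of the shape $\exp\bigl(-c'(mt^2/C^2-1)\bigr)$, which is the form in the statement.

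\textbf{Main obstacle.} The hard step is matching the exact constant $2$ in the exponent. The crude bounded-difference constant $c_i=2C/m$ loses a factor of $4$ relative to the stated bound. To recover the stated constant, the first thing I would try is the sharper entropy-method (Efron--Stein type) bound used in the cited example; that bound exploits the variance proxy $\sum_i \E\|X_i-\bar X\|^2/m^2\le C^2/m$ in place of the worst-case differences. If that does not close the gap, the fallback is to check which normalization of $C$ (for instance, a bound on $\|X_i-\bar X\|$) the cited example actually uses.

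For the later application this is harmless: the lemma is only used to show that $\|v_S\|/m$ stays close to its mean with probability bounded below, and any bound of the form $e^{-c(mt^2/C^2-1)}$ with a universal constant $c$ is enough for that purpose.
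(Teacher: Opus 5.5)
The paper gives no proof of this lemma, only the citation, so the question is whether your argument proves the statement as written. It does not, and the repair you propose for the ``main obstacle'' cannot work, because the inequality as transcribed is false. Take $m=2$ and $X_1,X_2$ i.i.d.\ uniform on $\{-C,C\}\subset\reals$, so $\bar X=0$ and $\|X_i\|=C$. For $t=0.9C>C/\sqrt2$ one has $\P\bigl(|\tfrac12(X_1+X_2)|>t\bigr)=\P(X_1=X_2)=\tfrac12$. The claimed bound, however, is $e^{-2(1.62-1)}=e^{-1.24}<0.3$.

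This is not a small-$m$ artifact. For Rademacher variables scaled by $C$ and $t=2C/\sqrt m$, the central limit theorem sends the left-hand side to $\P(|N(0,1)|>2)\approx 0.046$, while the right-hand side equals $e^{-6}\approx 0.0025$. Your own second-moment computation explains why. The variance proxy $C^2/m$ is attained by these variables, so no bound of the form $K e^{-\alpha m t^2/C^2}$ valid for all $m$ can have $\alpha>1/2$. Hence no entropy-method or Efron--Stein refinement can recover the missing factor of $4$.

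What your McDiarmid argument does prove is $\P(Z>t)\le\exp\bigl(-\tfrac12(\sqrt m\,t/C-1)^2\bigr)$ for $t>C/\sqrt m$. That is a correct version of this bounded-differences example. Your fallback, suspecting the normalization of $C$ in the transcription, is the right instinct. Note also that your further simplification via $(t-a)^2\ge t^2/2-a^2$ gives $\exp\bigl(-\tfrac14(mt^2/C^2-2)\bigr)$, which is not of the form $e^{-c'(mt^2/C^2-1)}$.

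Your closing claim that any universal constant $c$ suffices for the application is also wrong. There $X_i=\delta_{z_i}$ with $C=2$ and $t=3/\sqrt m$, so $mt^2/C^2=9/4$. The bound you can actually prove then gives only $e^{-1/8}\approx 0.88$. A bound $e^{-c(mt^2/C^2-1)}$ drops to $1/2$ only if $c\ge\tfrac45\ln 2$.

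What does rescue $\|v_S\|\le 3\sqrt m$ with probability at least $1/2$ is the mean computation you already made. The $\delta_{z_j}$ are independent with mean zero and $\|\delta_i\|\le 2$, so $\E\|v_S\|^2\le 4m$. Markov's inequality applied to $\|v_S\|^2$ then gives $\P(\|v_S\|>3\sqrt m)\le 4/9<1/2$. So the downstream use survives, but through Chebyshev rather than through the lemma as stated.
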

Applying \cref{lem:boucheron} to the random variable $X_i=\delta_i$ yields that, with probability at least $1/2$,
\begin{align} \label{eq:v_S_bound}
    \|v_S\| &= \|v_S- m\cdot0\| = \|v_S- \frac{m}{k}\sum_{i=1}^k \delta_i\|
    = \|\sum_{i=1}^m \delta_{z_i}- m\cdot\E_{i\sim D}[\delta_i] \|\ \le 3 \sqrt{m}
\end{align}
We throughout assume this event happened, and analyze the generalization error under this event.

\paragraph{The special case of $\epsilon=0$}
We begin by proving the statement for the special case of \emph{exact} ERM. The general case then follows easily and is deferred to the end of the proof. Therefore, we now assume that $\epsilon=0$ and we only need to show that:
\begin{equation} \label{eq:ERM_is_bad_point}
    F(w_S^\star)-F(0) \geq \min\left\{\frac{\rho}{72m^{1.5}\lambda},\frac{\rho}{12}\right\}.
\end{equation}
We start with the following claim 
\begin{claim}\label{cl:dynamics_erm}
Suppose we choose the parameters of $f(w,i)$ in the following regime:
\begin{equation}\label{eq:cond_on_gammas_erm} 
\frac{9}{\sqrt{m}} \leq \lambda^m < \frac{27}{2\sqrt{m}}, 
\qquad
\gamma^m \leq \frac{1}{2m }, 
\qquad
\gamma^c \leq \frac{\gamma^m}{9\sqrt{m}},
\qquad
\lambda^c \ge 3\gamma^c,
\qquad
\zeta \ge \frac{\gamma^c}{\lambda^c}
\end{equation}
Then, the unique minimizer of $F_S$ in $\mathcal{W}_d$ is given by:
\begin{equation}\label{eq:dynamics_erm}
(w_{S}^{\star})^c = \frac{\gamma^c}{\lambda^c} \overline{G(v^{s}_S)} 
\qquad
(w_{S}^{\star})^{m}=\frac{1}{\lambda^m}\Bigl(\frac{1}{m}v_S-\gamma^m v_S^s\Bigr).
\end{equation}
\end{claim}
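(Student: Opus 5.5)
The plan is to guess-and-verify through first-order optimality. Since $F_S$ is $\alpha$-strongly convex with $\alpha=\min\{\lambda^m,\lambda^c\}>0$, it has at most one minimizer on $\mathcal{W}_d$. It therefore suffices to show three things about the candidate $w^\star$ of \cref{eq:dynamics_erm}: it lies in the unit ball, and $0\in\partial F_S(w^\star)$. A zero subgradient at a feasible point certifies a global minimizer of a convex function over a convex set, with no boundary analysis needed. The empirical objective is
\[
F_S(w)=h^\zeta_S(w^c)-\frac{1}{m}\sdot{w^m}{v_S}+\max\{p(w),0\}+\frac{\lambda^m}{2}\|w^m\|^2+\frac{\lambda^c}{2}\|w^c\|^2 ,
\]
so I need a subgradient of each nonsmooth term at $w^\star$.

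First I analyze $w^m:=(w^\star)^m$ coordinatewise, using the definitions of $v_S$ and $v_S^s$.
\begin{itemize}
\item For $i\notin S$: $w^m(i)=\frac{1}{\lambda^m}\bigl(\frac1m-\gamma^m\bigr)\ge\frac{1}{2m\lambda^m}>0$, because $\gamma^m\le\frac1{2m}$.
\item For $i\in S$: $v_S(i)\le -1$, so $w^m(i)\le\frac{1}{\lambda^m}\bigl(-\frac1m+\gamma^m\bigr)\le-\frac{1}{2m\lambda^m}<0$.
\end{itemize}
Hence $\mathrm{sign}(w^m)=v_S^s$ and $\min_i|w^m(i)|\ge\frac{1}{2m\lambda^m}$.

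Now consider the objective in $p$ evaluated at $w^\star$ for a competitor $v\ne v_S^s$. The message term loses at least $2\gamma^m\cdot\frac{1}{2m\lambda^m}\ge\frac{2\gamma^m}{27\sqrt m}$, using $\lambda^m<\frac{27}{2\sqrt m}$. The code term $-\gamma^c\frac{\gamma^c}{\lambda^c}\sdot{\overline{G(v)}}{\overline{G(v_S^s)}}$ can gain at most $\gamma^c\frac{\gamma^c}{\lambda^c}\bigl(1+1-\frac\rho2\bigr)<\frac{2\gamma^c}{3}\le\frac{2\gamma^m}{27\sqrt m}$. This uses $\lambda^c\ge3\gamma^c$, $\gamma^c\le\gamma^m/(9\sqrt m)$, and the code distance, which makes the inequality strict. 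So $v_S^s$ is the unique maximizer in $p$. Moreover $p(w^\star)=\gamma^m\|w^m\|_1-(\gamma^c)^2/\lambda^c>0$, since the first term is at least $k\gamma^m/(2m\lambda^m)$ and the second is at most $\gamma^c/3$. Therefore $\max\{p,0\}$ is differentiable at $w^\star$ with gradient $\bigl(-\gamma^c\overline{G(v_S^s)},\ \gamma^m v_S^s\bigr)$.

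Next I handle the Feldman term at $w^c:=(w^\star)^c=\frac{\gamma^c}{\lambda^c}\overline{G(v_S^s)}$. For each $z_j\in S$, every $x\in W_{z_j}$ has $v(z_j)=1\ne v_S^s(z_j)$, so $x\ne G(v_S^s)$. Hence
\[
\sdot{w^c}{\overline x}<\frac{\gamma^c}{\lambda^c}\Bigl(1-\frac\rho2\Bigr)\le\zeta\Bigl(1-\frac\rho2\Bigr),
\]
using $\zeta\ge\gamma^c/\lambda^c$. The constant branch is therefore the strict maximizer, and $\nabla h^\zeta_S(w^c)=0$. With these subgradients, the stationarity equations read
\[
-\frac1m v_S+\gamma^m v_S^s+\lambda^m w^m=0,\qquad -\gamma^c\overline{G(v_S^s)}+\lambda^c w^c=0 ,
\]
and both are satisfied exactly by \cref{eq:dynamics_erm}.

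Finally I check feasibility. We have $\|w^c\|\le\frac13$. On the event \cref{eq:v_S_bound}, $\|v_S\|\le3\sqrt m$, so
\[
\|w^m\|\le\frac{\sqrt m}{9}\Bigl(\frac{3}{\sqrt m}+\gamma^m\sqrt{2m}\Bigr)\le\frac{1}{3}+\frac{\sqrt2}{18}<\frac{2}{3}.
\]
Thus $\|w^\star\|^2<1$ and $w^\star$ is an interior point, so the zero subgradient gives global optimality and strong convexity gives uniqueness.

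The main obstacle is the maximizer-separation step, because the constants are tight there. The gap $\frac{2\gamma^m}{27\sqrt m}$ exactly matches the crude bound $\frac{2\gamma^c}{3}$. I would rely on the strict inequalities, namely $\lambda^m<\frac{27}{2\sqrt m}$ and the code distance bound $\sdot{\overline{G(v)}}{\overline{G(v_S^s)}}<1-\frac\rho2$, to make the margin strictly positive.
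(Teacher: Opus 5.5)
Your proposal is correct and follows the paper's proof essentially step for step: you show the explicit candidate is feasible, that $\nabla h^\zeta_S$ vanishes there, and that $v_S^s$ is the unique maximizer in $p$ with $p>0$, then conclude by stationarity plus strong convexity (your direct formula $p(w^\star)=\gamma^m\|w^m\|_1-(\gamma^c)^2/\lambda^c$ is cleaner than the paper's ``$v=0$'' remark). One slip: the code distance only bounds $\sdot{\overline{G(v)}}{\overline{G(v_S^s)}}$ from above, so it cannot shrink the code-term gain below $2(\gamma^c)^2/\lambda^c\le 2\gamma^c/3$, and strictness must instead come from $\lambda^m<\frac{27}{2\sqrt m}$, which gives $\frac{\gamma^m}{m\lambda^m}>\frac{2\gamma^m}{27\sqrt m}$ exactly as in the paper (you also cite this, so nothing breaks).
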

Before we prove the claim let us observe how it entails \cref{eq:ERM_is_bad_point}. First, observe that for any $w \in \mathcal{W}_d$
\begin{align*}
    F(w)-F(0)&\ge  h_D^\zeta(w^c)- h_D^\zeta(0) -\sdot{w^m}{\E[\delta_z]} + \max\{p(w),0\} \\
    &=h_D^\zeta(w^c)- h_D^\zeta(0)+\max\{p(w),0\} & \E[\delta_z]=0\\
   & \geq h_D^\zeta(w^c)- h_D^\zeta(0) & 0 \leq \max\{p(w),0\} \labelthis{eq:Ftoh}
\end{align*}
Next, use the fact that with probability at least $1-\frac{m}{k} \geq \tfrac{1}{2}$, we have that  $G(v^{s}_S)\in W_{z}$. Indeed, $G(v^{s}_S)\notin W_{z}$ only for $\{W_{z_1},\ldots, W_{z_m}\}$. Therefore, by the definition of Feldman's function:
\begin{align*}
    F(w_{S}^{\star})-F(0)& \ge  h_D^\zeta((w_{S}^{\star})^c)- h_D^\zeta(0) &\cref{eq:Ftoh} \\
    &\ge \frac{1}{2}\left(\sdot{\overline{G(v^{s}_S)}}{(w_{S}^{\star})^c}-\zeta \left(1-\frac{\rho}{2}\right)\right)  & \P(G(v^{s}_S)\in W_z)\ge 1/2 \\
    &= \frac{1}{2}\left(\frac{\gamma^c}{\lambda^c}-\zeta+\frac{\zeta\rho}{2}\right) & (w_{S}^{\star})^c= \frac{\gamma^c}{\lambda^c}\overline{G(v^{s}_S)}
\end{align*}
Setting $\zeta = \frac{\gamma^c}{\lambda^c}$ yields
\begin{align*} \left(\frac{\gamma^c}{\lambda^c}-\zeta+\frac{\zeta\rho}{2}\right) & \geq 
\frac{\rho \gamma^c}{2\lambda^c} 
\end{align*}
and \cref{eq:ERM_is_bad_point} follows by choosing 
\begin{equation*} \gamma^m = \frac{1}{2m}, \qquad
\gamma^c= \min\left\{\frac{1}{18m^{1.5}}, \frac{\lambda}{3}\right\},\quad \lambda^c = \lambda, 
\end{equation*}

It remains to show  that \cref{eq:dynamics_erm} holds:

\paragraph{Proof of \cref{cl:dynamics_erm}} Averaging \eqref{eq:loss_def_erm} over $S$ gives the empirical objective
\begin{equation}\label{eq:FS_explicit_erm}
F_S(w)
=
h^{\zeta}_S\!\left(w^c\right)
-\frac{1}{m}\sdot{w^m}{v_S} 
+\max\{p(w),0\}
+\frac{\lambda^m}{2}\|w^m\|^2
+\frac{\lambda^c}{2}\|w^c\|^2.
\end{equation}
Denote
\begin{equation}\label{eq:wtilde_def}
\widetilde{w^c} := \frac{\gamma^c}{\lambda^c}\overline{G(v_S^s)},
\qquad
\widetilde{w^m} := \frac{1}{\lambda^m}\Big(\frac{1}{m}v_S-\gamma^m v_S^s\Big).
\end{equation}
and let us show that $\widetilde{w}= w_S^\star$, where $\widetilde{w}=(\widetilde{w^m}, \widetilde{w^c})$. To begin, we first show that $\widetilde{w}$ is indeed a feasible solution. This follows from a straightforward calculation and the constraint on the parameter set:
\begin{align*}
    \|\widetilde{w}\|&\le \|\widetilde{w^m}\|+\|\widetilde{w^c}\| \\
    &\le \frac{1}{\lambda^m}\left(\frac{1}{m}\|v_S\| + \gamma^m \|v_S^{s}\|\right) + \frac{\gamma^c}{\lambda^c}\|\overline{G(v_S^s)}\|\\
    & \le \frac{1}{\lambda^m}\left(\frac{3}{\sqrt{m}} + 2\gamma^m\sqrt{m}\right) + \frac{\gamma^c}{\lambda^c}\|\overline{G(v_S^s)}\| & \|v_S\|\le 3\sqrt{m}\\
    & \le \frac{1}{\lambda^m}\left(\frac{3}{\sqrt{m}} + \frac{1}{\sqrt{m}}\right) + \frac{1}{3}\|\overline{G(v_S^s)}\| &\gamma^m\le \frac{1}{2m}, ~\gamma^c \le \lambda^c/3 \\
    &\le 1 & \lambda^m\ge 9/\sqrt{m}
\end{align*}
We next want to show that $\nabla F_S(\widetilde{w}) = 0$, observe that because the function $F_S$ is strictly convex, it also implies that $\widetilde{w}$ is the unique minimizer.
To prove $\nabla F_S(\widetilde{w}) = 0$ we show that the following two equations hold:
\begin{equation}\label{eq:h_term_subgrad_zero}
\nabla h^{\zeta}_S\!\left(\widetilde{w^c}\right) = 0.
\end{equation}
and,
\begin{equation}\label{eq:subgrad_p_at_wtilde}
\nabla p(\widetilde w) = \left(-\gamma^c\overline{G(v_S^s)}, \gamma^m v_S^s\right) \qquad \text{and} \qquad p(\widetilde w) > 0.
\end{equation}
Given \cref{eq:subgrad_p_at_wtilde,eq:h_term_subgrad_zero}, we only need to plug in the differential of $p$ at $\widetilde{w}$ and the result follows from a direct calculation of the derivative in \cref{eq:FS_explicit_erm}, which yields:

\begin{align*}
\nabla F_S(\widetilde w)=\left( -\gamma^c\overline{G(v_S^s)} + \lambda^c \widetilde{w^c}, -\tfrac{1}{m}v_S + \gamma^m v_S^s + \lambda^m \widetilde{w^m}\right)=(0,0)
\end{align*}

\paragraph{Proof of \cref{eq:h_term_subgrad_zero}}

Indeed, since $G(v^{s}_S)\notin W_{z_i}$, we have by the definition of the code that for every $x\in W_{z_i}$, 
$\sdot{\overline{G(v^{s}_S)}}{\overline{x}}< 1-\frac{\rho}{2}$.
Therefore, for all $j\in[m]$,
\begin{align*}
    \max_{x\in W_{z_j}}
\sdot{\widetilde{w^c}}{\overline x} 
&= 
\frac{\gamma^c}{\lambda^c}
\max_{x\in W_{z_j}}
\sdot{\overline{G(v_S^s)}}{\overline x} & \cref{eq:wtilde_def}\\
& <
\frac{\gamma^c}{\lambda^c}\Big(1-\frac{\rho}{2}\Big) &  \sdot{\overline{G(v^{s}_S)}}{\overline{x}}< 1-\frac{\rho}{2}\\
& \le
\zeta \left(1-\frac{\rho}{2}\right)  & \cref{eq:cond_on_gammas_erm} \left( \zeta\ge \frac{\gamma^c}{\lambda^c}\right)
\end{align*}
and therefore,
$h_S^\zeta(\widetilde{w^c})= \zeta\left(1-\frac{\rho}{2}\right)$, and by the definition of Feldman's function \cref{eq:h_term_subgrad_zero} holds.

\paragraph{Proof of \cref{eq:subgrad_p_at_wtilde}}

By the definition of $p$ this amounts to showing that $v_S^s$ is the unique maximizer in the term, and that its corresponding value is nonnegative. Namely for every $v\in\{-1,1\}^k$, distinct from $v_S^s$:
\begin{equation}\label{eq:uniquep} \gamma^m \sdot{v_S^s}{\widetilde{w^m}}-\gamma^c\sdot{\overline{G(v_S^s)}}{\widetilde{w^c}}> \max\left\{\gamma^m \sdot{v}{\widetilde{w^m}}-\gamma^c\sdot{\overline{G(v)}}{\widetilde{w^c}},0\right\}.\end{equation}
Let $v\neq v_S^s$, then there exists at least one coordinate $i$ such that $v(i)\neq v_S^s(i)$. Because $v_S^s(j)\cdot \widetilde{w^m}(j)$ is positive for all $j$, we obtain
\begin{align*}
\sdot{v_S^s}{\widetilde{w^m}} - \sdot{v}{\widetilde{w^m}}
&\geq
\big(v_S^s(i)-v(i)\big)\widetilde{w^m}(i) \\
&=
2 v_S^s(i)\widetilde{w^m}(i) & v(i)=-v^s_S(i)\\
&=\frac{2v_S^s(i)}{\lambda^m}\Big(\frac{1}{m}v_S(i)-\gamma^m v_S^s(i) \Big) &  \cref{eq:wtilde_def}\\
&=\frac{2}{\lambda^m}\Big(\frac{1}{m}|v_S(i)|-\gamma^m \Big) & v_S(i)v^s_S(i)=|v_S(i)| \\
&\geq\frac{2}{\lambda^m}\Big(\frac{1}{m}-\gamma^m \Big) & |v_S(i)| \geq 1 \\
& \geq \frac{1}{m\lambda^m} &  \cref{eq:cond_on_gammas_erm}\left(\gamma^m\le \frac{1}{2m}\right)\labelthis{eq:first_term_gap}
\end{align*}
On the other hand,
\begin{align*}
    \left|\sdot{\overline{G(v_S^s)}}{\widetilde{w^c}}  - \sdot{\overline{G(v)}}{\widetilde{w^c}} \right|
&\le
\big\| \overline{G(v_S^s)}-\overline{G(v)}\big\|\|\widetilde{w^c}\| \\
&\le
2\|\widetilde{w^c}\| & \|\overline{G(v_S^s)}\|=\|\overline{G(v)}\|=1  \\
&\le
\frac{2\gamma^c}{\lambda^c} & \cref{eq:wtilde_def}
\end{align*}
Combining with \cref{eq:first_term_gap}, 
\begin{equation}\label{eq:phi_gap_final}
\gamma^m \sdot{v_S^s}{\widetilde{w^m}}-\gamma^c\sdot{\overline{G(v_S^s)}}{\widetilde{w^c}}\ge  \gamma^m \sdot{v}{\widetilde{w^m}}-\gamma^c\sdot{\overline{G(v)}}{\widetilde{w^c}} +\frac{\gamma^m}{m\lambda^m} -\frac{2(\gamma^c)^2}{\lambda^c}
\end{equation}
To obtain nonnegativity of the term, a similar calculation shows that for a choice $v=0$ and setting $G(0)=0$:
\begin{equation} 
    \gamma^m \sdot{v_S^s}{\widetilde{w^m}}-\gamma^c\sdot{\overline{G(v_S^s)}}{\widetilde{w^c}} \ge \frac{\gamma^m}{2m\lambda^m} -\frac{(\gamma^c)^2}{\lambda^c} = \frac{1}{2} \left(\frac{\gamma^m}{m\lambda^m} - \frac{2(\gamma^c)^2}{\lambda^c} \right).
\end{equation}
Our choice of parameters yields
\begin{align*}
    \frac{2(\gamma^c)^2}{\lambda^c} & \leq \frac{2}{\lambda^c}\cdot \frac{\lambda^c}{3} \cdot \frac{\gamma^m}{9\sqrt{m}} & \gamma^c \leq \min\left\{\frac{\gamma^m}{9\sqrt{m}},\frac{\lambda^c}{3}\right\} \\
    & = \frac{2 \gamma^m}{27\sqrt{m}} \\
    & < \frac{\gamma^m}{m \lambda^m} & \lambda^m < \frac{27}{2\sqrt{m}},
\end{align*}
and \cref{eq:uniquep} holds and in turn \cref{eq:subgrad_p_at_wtilde}. 
\qed

With the proof of \cref{cl:dynamics_erm} concluded, this completes the proof of the case $\epsilon=0$.

\paragraph{The general case, $\epsilon>0$}
We proved the statement for $w_S=w^\star_S$, the unique minimizer. We now move on to the general case and assume $\epsilon>0$. Let $w_S$ be an $\epsilon$-ERM solution. By strong convexity of $F_S$ we have that:
\begin{align*} F_S (w_S)& \ge F_S(w^\star_S) +\sdot{\nabla F_S(w^\star_S)}{w_S-w^\star_S} +  \frac{\lambda}{2}\|w_S-w^\star_S\|^2\\
&\ge  F_S(w^\star_S) +  \frac{\lambda}{2}\|w_S-w^\star_S\|^2 & \sdot{\nabla F_S(w^\star_S)}{w_S-w^\star_S}\ge 0,\end{align*}
where the last inequality follows from first order optimality of $w^\star_S$. It follows then:
\begin{align*}
    F(w_S)- F(0) &= F(w^\star_S)- F(0) + F(w_S)-F(w^\star_S)\\
    & \ge F(w^\star_S)- F(0)- 7\|w_S- w^\star_S\| & \textrm{$F$~is~$7$-Lipschitz}
    \\
     & \ge F(w^\star_S)- F(0)- 7\sqrt{\frac{2}{\lambda}\left(F_S(w_S)-F_S(w_S^\star)\right)} \\
     &\ge F(w^\star_S)- F(0)- 7\sqrt{\frac{2\epsilon}{\lambda}}\\
     &\ge \min\left\{\frac{\rho}{72m^{1.5}\lambda},\frac{\rho}{12}\right\} - 7\sqrt{\frac{2\epsilon}{\lambda}} & \cref{eq:ERM_is_bad_point}.
\end{align*}

\section{Proof of \cref{thm:lower_bound_ball_d=m_all_averages}}\label{prf:GD}

The proof is an immediate corollary of the following Lemma, which generalizes to other suffix-averaged iterates, defined for any $s<T$ as
\begin{equation} \label{eq:suffix_average_w}
w_{S,s}:=\frac{1}{T-s}\sum_{t=s+1}^T w_t .
\end{equation}

\begin{lemma}
\label{lem:lower_bound_ball_d=m_suffix_averages}
There exists a constant $\rho\in(0,1/2)$ such that the following holds. Fix $m>\max\{80^2, \frac{16}{\rho^2}\}$, $ T \in \mathbb{N}$,
$\eta\in(0,1)$,
satisfying $\eta T>\sqrt{m}$,
and a suffix parameter $s<T$.
Then for $d=6m$ there exist a finite instance space $\mathcal{Z}$, a distribution $D$ over $\mathcal{Z}$ and a convex loss $f:\mathcal{W}_{d}\times\mathcal{Z}\to\mathbb{R}$ that is $7$-Lipschitz in $w\in \mathcal{W}_{d}$,
such that for $S\sim D^m$, if we run GD for $T$ steps, with step size \(\eta\), and output the suffix-average as defined in \cref{eq:suffix_average_w}, then with probability at least $\tfrac12$ over $S$,
\[
F(w_{S,s})-\min_{w\in\mathcal{W}_d}F(w) \geq \min\left\{
\frac{\rho^2}{32}\sqrt{\frac{(1-1/\sqrt{2})\,\eta T}{30\,m^{3/2}}},
\;
\frac{\rho}{8\sqrt{3}}
\right\}.
\]
\end{lemma}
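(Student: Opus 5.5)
We will reuse the construction of \cref{lem:erm_strongly_convex_gap_to_population}, with $k=2m$, $d=3k$, $w=(w^c,w^m)$ and $D$ uniform on $[k]$. The loss is
\[
f(w,i)=h^{\zeta}(w^c,i)-\sdot{w^m}{\delta_i}+\max\{p(w),0\}+\frac{\lambda^c}{2}\|w^c\|^2 ,
\]
with the same $p(w)=\max_{v\in\{-1,1\}^k}\{\gamma^m\sdot{v}{w^m}-\gamma^c\sdot{\overline{G(v)}}{w^c}\}$. As in the GD overview, we drop the regularizer on $w^m$ and set $\gamma^m$ slightly below $1/m$, of the form $\gamma^m=\frac1m-\Theta\!\left(\frac{1}{\eta T\sqrt m}\right)$. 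Then $-\frac1m v_S+\gamma^m v_S^s$ is a small drift, and after $T$ steps it moves $w^m$ only a constant distance, so $w^m$ stays in the ball. The remaining parameters are $\lambda^c=2\gamma^c$ and $\zeta$ of order $\min\{1,\eta\gamma^c T\}$. We restrict to the probability-$\ge 1/2$ events used before: $\|v_S\|\le 3\sqrt m$, and $G(v_S^s)\in W_z$ with probability at least $1/2$ over a fresh $z$. The lower bound $m>16/\rho^2$ is what lets us absorb lower-order terms.

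\textbf{Step 1: an explicit trajectory, proved by induction on $t$.} The hypothesis is that for every $t\ge1$ no projection is active and the iterates have the closed form
\[
w_t^m=a_t\left(\tfrac1m v_S-\gamma^m v_S^s\right)\ \text{with } a_t\approx\eta t,\qquad w_t^c=\frac{\gamma^c}{\lambda^c}\left(1-(1-\eta\lambda^c)^{t}\right)\overline{G(v_S^s)} .
\]
The hypothesis also requires that $v_S^s$ is the unique maximizer in $p(w_t)$ with $p(w_t)>0$, and that $\nabla h^\zeta_S(w_t^c)=0$. At $t=0$ we use $0\in\partial\max\{p,0\}(0)$ and a valid subgradient choice. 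For the inductive step we use three facts.
\begin{itemize}
\item Coordinatewise, $v_S^s(j)w_t^m(j)\ge a_t(\frac1m-\gamma^m)>0$. This gives a margin $\gamma^m a_t(\frac1m-\gamma^m)$ in the message part of $p$.
\item The code part can change $p$ by at most $2\gamma^c\|w_t^c\|\le 2(\gamma^c)^2 t\eta$.
\item Since $\|w_t^c\|\le\zeta$, the argument from the proof of \cref{cl:dynamics_erm} gives $h^\zeta_S$ flat at $w_t^c$.
\end{itemize}
Taking $\gamma^c$ small, roughly $(\gamma^c)^2\ll\gamma^m(\frac1m-\gamma^m)$, closes the induction.

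\textbf{Step 2: population gap of the suffix average.} Averaging the closed form gives $w_{S,s}^c=c\,\overline{G(v_S^s)}$ with $c\ge\frac{\gamma^c}{\lambda^c}\cdot\Omega\!\left(\min\{1,\eta\lambda^c T\}\right)$. In the worst case $s=T-1$ we still get a $(1-1/\sqrt2)$-type factor, from averaging $1-(1-x)^t$ over the last half of the horizon.

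By \cref{eq:Ftoh}, which still holds here since $\E\delta_z=0$ and $\max\{p,0\}\ge0$, we have $F(w)\ge h_D^\zeta(w^c)+\frac{\lambda^c}{2}\|w^c\|^2$. Arguing as in the ERM proof, $F(w_{S,s})-F(0)\ge\frac12\left(c-\zeta(1-\rho/2)\right)$, and this is order $\rho c$ once we pick $\zeta$ slightly below $c$.

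Relating $\min F$ to $F(0)$ needs $\min_w F\le F(0)$, which is trivial, so bounding $F(w_{S,s})-F(0)$ is enough. The extra $\|w^c\|^2$ penalty, of size $\gamma^c c^2$, is lower order.

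\textbf{Step 3: balancing, the main obstacle.} Two requirements pull against each other. Growth needs $\eta\lambda^c T=\eta\gamma^c T\gtrsim 1$. The induction needs $(\gamma^c)^2\lesssim\gamma^m(\frac1m-\gamma^m)\approx\frac{1}{m\cdot\eta T\sqrt m}$.

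So we set $\gamma^c\approx\sqrt{\frac{1}{\eta T m^{3/2}}}$, with the constant $30$ coming from the inequalities. If $\eta T$ is large enough that $\eta\gamma^c T\ge1$, then $c=\Theta(1)\cdot\min$ of the two regimes. Otherwise $c\approx\eta T\gamma^c\approx\sqrt{\eta T/m^{3/2}}$, which gives the $\sqrt{\eta T/m^{3/2}}$ rate, capped at a constant $\rho/(8\sqrt3)$.

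The delicate part is keeping the margin in $p$ uniform over all $t\le T$ while the code term grows linearly in $t$. The plan is to compare the message margin $\gamma^m a_t(\frac1m-\gamma^m)$, which also grows linearly in $t$, against $(\gamma^c)^2\eta t$. The $t$ cancels, and this is exactly the condition above. We would also check that $\|w_t\|\le1$ using $\eta T>\sqrt m$ and $m>80^2$, so that projections never activate.
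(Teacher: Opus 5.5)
There is a genuine gap. Your overall architecture matches the paper: a closed-form GD trajectory proved by induction, a uniform margin for $v_S^s$ in $p$ weighed against the $2(\gamma^c)^2\eta t$ code perturbation, and the balancing $\gamma^c\approx(\eta T m^{3/2})^{-1/2}$. However, two of your parameter choices break the argument in dimension $d=6m$.

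\textbf{Dropping the regularizer on $w^m$.} This only works in the overview's $d=\Theta(m^2)$ setting, where the sample has no collisions. Here $k=2m$ and $S$ consists of $m$ uniform draws from $[2m]$. With high probability $\Theta(m)$ indices then have $m_j\ge 2$, so $|v_S(j)|=2m_j-1\ge 3$. For those indices the drift coordinate satisfies $\bigl|\tfrac1m v_S(j)-\gamma^m v_S^s(j)\bigr|=\tfrac{|v_S(j)|}{m}-\gamma^m\ge\tfrac{2}{m}$, no matter how close $\gamma^m$ is to $1/m$. Conditioning on a collision-free sample is not an option, since that event has probability about $e^{-m/4}$. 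The drift therefore has norm $\Theta(1/\sqrt m)$, which is consistent with $\|v_S\|\le 3\sqrt m$, the only bound available. After $T$ steps $\|w_T^m\|=\Omega(\eta T/\sqrt m)$, which exceeds $1$ for most of the regime $\eta T>\sqrt m$ (and $\eta T$ is unbounded there). So projections do activate, and your closed form and induction collapse.

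The paper keeps $\frac{\lambda^m}{2}\|w^m\|^2$ with $\lambda^m\in[\frac{18}{\sqrt{2m}},\frac{18}{\sqrt m})$. Then $w_t^m$ converges to the bounded fixed point $\frac{1}{\lambda^m}\bigl(\frac1m v_S-\gamma^m v_S^s\bigr)$, which has norm at most $\sqrt{2/3}$. This lets $\gamma^m=(1-1/\sqrt2)/m$ sit a constant factor below $1/m$; that, not suffix averaging, is the source of the $(1-1/\sqrt2)$ factor. The message margin becomes $\min\{\frac{\gamma^m(t-1)\eta}{\sqrt2\,m},\frac{\gamma^m(1-e^{-1})}{9\sqrt m}\}$. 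The paper compares it with $2(\gamma^c)^2\eta(t-1)\le 2(\gamma^c)^2\eta T$, using $\gamma^c\le\sqrt{\gamma^m/(30\sqrt m\,\eta T)}$ and $\eta T>\sqrt m$.

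\textbf{Tying $\lambda^c=2\gamma^c$.} This fails for averaged outputs. Flatness of $h^\zeta_S$ along the trajectory needs $\zeta\ge\sup_{t<T}\|w_t^c\|$, as your own third bullet says, so you may not take $\zeta$ ``slightly below $c$''. Since $\|w_t^c\|$ increases in $t$, the worst suffix is the full average ($s=0$), not $s=T-1$. That average exceeds $(1-\rho/2)\sup_t\|w_t^c\|$ only if $\eta\lambda^c T\gtrsim 2/\rho$; for a linear ramp it is about half the sup. Under your coupling, $\eta\lambda^c T\approx 2\sqrt{\eta T/m^{3/2}}$, which is too small throughout the interesting range $\sqrt m<\eta T\lesssim m^{3/2}/\rho^2$. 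There $c-\zeta(1-\rho/2)<0$ and you get no gap.

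The paper decouples the two parameters. It sets $\lambda^c=\frac{4}{\rho\eta T}$, which forces saturation and makes every suffix average at least $\frac{\gamma^c}{\lambda^c}(1-\frac{\rho}{4})$. It then takes $\zeta=\gamma^c/\lambda^c$, which yields the gap $\frac{\gamma^c\rho^2\eta T}{32}$. The $\sqrt{\eta T/m^{3/2}}$ rate comes from $\gamma^c/\lambda^c\propto\gamma^c\eta T$, capped through $\gamma^c\le\lambda^c/\sqrt3$.
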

\begin{proof}
For a fixed $m>\max\{80^2, \frac{16}{\rho^2}\}$, set $k=2m$ and let $d=3k \in \Theta(m)$.
We will decompose the parameter vector and denote \[w=(w^c,w^m)\in\mathbb{R}^{2k+k},\] where $w^c\in\mathbb{R}^{2k}$ and $w^m\in\mathbb{R}^k$. For our construction, for each $i\in [k]$ we define the following loss function, parameterized by $\zeta, \gamma^c, \lambda^{m}, \lambda^{c}\le 1$ and $\gamma^{m}\leq\frac{1}{\sqrt{k}}$ 
\begin{equation} \label{eq:loss_def_d=m}
f(w,i)
:= h^{\zeta}(w^c,i)-\sdot{w^m}{\delta_i} + \max\{p(w),0\} +\frac{\lambda^{m}}{2}\|w^m\|^2+\frac{\lambda^{c}}{2}\|w^c\|^2,
\end{equation}
where, $h^{\zeta}(w,i)$ is Feldman's function in $\reals^{2k}$, with an asymptotically good code $G=G_{k}$ as defined in \cref{eq:feldman}:
\[
\delta_i(j) = \begin{cases}
    1/m-2 & i = j \\
    1/m & i\ne j \\
\end{cases},
 \qquad p(w)
:=
\max_{v\in \{-1,1\}^k} \left\{
\gamma^m \sdot{v}{w^m}
-\gamma^c \sdot{\overline{G(v)}}{w^c}
\right\}.
\]
One can observe that each term is convex, moreover $\|\delta_i\|\leq 2$ and $\gamma^m\sqrt{k}\leq 1$, so overall $f$ is $7$-Lipschitz in the unit ball.
Then, we let $D$ be a uniform distribution over $[k]$.
In our setting, we sample $m$ indices $z_1, z_2, \dots, z_m$ independently and uniformly from the set $\{1,2,\dots,2m\}$, and denote by $m_i$ the number of times the index $i$ is chosen. We set for a sample $S$ the vectors:
\[ 
 v_S(i)= 
 \begin{cases}
   1 - 2m_i & i\in S \\ 1 & i\notin S
\end{cases}
\qquad;\qquad 
v^{s}_S(i) = \begin{cases} -1 & i\in S \\ 1 & i\notin S\end{cases}
\]
Applying \cref{lem:boucheron} to the random variable $X_i=\delta_i$ yields that, with probability at least $1/2$,
\begin{align} \label{eq:vS_bound}
    \|v_S\| &= \|v_S- m\cdot0\| = \|v_S- \frac{m}{k}\sum_{i=1}^k \delta_i\|
    = \|\sum_{i=1}^m \delta_{z_i}- m\cdot\E_{i\sim D}[\delta_i] \|\ \le 3 \sqrt{m}
\end{align}
We throughout assume this event happened, and analyze the generalization error under this event.
We now make the following claim
\begin{claim}\label{cl:dynamics} Suppose we choose the parameters of $f(w,i)$ in the following regime:
\begin{equation}\label{eq:cond_on_gammas_d=m} 
\frac{18}{\sqrt{2m}}\leq\lambda^m< \frac{18}{\sqrt{m}}, \qquad
\gamma^m < \frac{1}{m}-\frac{\lambda^m }{18\sqrt{m}}, \quad \gamma^c \leq \min\left\{\sqrt{\frac{\gamma^m  }{30 \sqrt{m}\eta T}},\frac{\lambda^c}{\sqrt{3}}\right\}, \quad \zeta \ge \frac{\gamma^c}{\lambda^c},
\end{equation}
and suppose that $\eta T>\sqrt{m}$. Then for the update rule of GD over the empirical risk with sample $S$ is given by:
\begin{equation} \label{eq:dynamics_d=m}
w_t^c= \frac{\gamma^c}{\lambda^c}\left(1-(1-\eta \lambda^c)^{t-1}\right)  \overline{G(v^{s}_S)} ,\quad
w^{m}_t
=
\left(1-\eta \lambda^{m} \right)^{t-1}\frac{\eta}{m}  v_S
+
\frac{1-\left(1-\eta \lambda^{m} \right)^{t-1}}{\lambda^{m}}
\left(
\frac{1}{m} v_S - \gamma^{m}v^{s}_S
\right)
\end{equation}
\end{claim}

We defer the proof of the claim to \cref{prf:dynamics} and let us first observe how it entails the statement. First, observe that for every $w$:
\[ F(w)-F(0) \ge h^{\zeta}_D(w^c) - \zeta \left(1-\frac{\rho}{2}\right) ,\]
since $\E[\delta_i]=0$.
Therefore, it is enough to show that for the output $w_{S,s}=(w_{S,s}^c,w_{S,s}^m)$ we have that:

\begin{equation}\label{eq:h_low_bound} h^{\zeta}_D(w_{S,s}^c)-\zeta \left(1-\frac{\rho}{2}\right)\ge  \min\left\{
\frac{\rho^2}{32}\sqrt{\frac{(1-1/\sqrt{2})\,\eta T}{30\,m^{3/2}}},
\;
\frac{\rho}{8\sqrt{3}}
\right\}.\end{equation}
Indeed, we have that:
\begin{align*}
    \sdot{w^c_{S,s}}{\overline{G(v^{s}_S)}} &= \frac{1}{T-s}\sum_{t=s+1}^T \sdot{w^c_t}{\overline{G(v^{s}_S)}} &\\
    &=
    \frac{1}{T-s}\sum_{t=s+1}^T \frac{\gamma^c}{\lambda^c}\left(1-(1-\eta \lambda^c)^{t-1}\right) \|\overline{G(v^{s}_S)}\|^2 &\cref{eq:dynamics_d=m}\\
    &\ge
    \frac{\gamma^c}{T\lambda^c}\sum_{t=1}^T \left(1-(1-\eta \lambda^c)^{t-1}\right) \\
    &\ge
    \frac{\gamma^c}{\lambda^c} \left(1-\frac{1- (1-\eta \lambda^c)^T}{\lambda^c \eta T}\right)\\
    &\ge \frac{\gamma^c}{\lambda^c}\left(1-\frac{1}{\lambda^c \eta T}\right) & 0 < \eta \lambda^c <1
\end{align*}
Now we use the definition of Feldman's function, and the fact that for $i\sim D$ we have 
$G(v^{s}_S)\in W_i$ with probability at least $1-\frac{m}{k}\geq\tfrac{1}{2}$ and we obtain:
\begin{align*} h^{\zeta}_D(w_{S,s}^c)-\zeta(1-\frac{\rho}{2})& \geq 
\frac{1}{2} \left( \frac{\gamma^c}{\lambda^c}\left(1-\frac{1}{\lambda^c \eta T} \right) -\zeta +\frac{\zeta \rho}{2} \right)\end{align*}
Setting $\zeta=\frac{\gamma^c}{\lambda^c}$ and $\lambda^c=\frac{4}{\rho \eta T}$
(note that $\lambda^c\le 1$ since $\eta T>\sqrt{m}$ and $m>\tfrac{16}{\rho^2}$),
we obtain:
\begin{align*} h^{\zeta}_D(w_{S,s}^c)-\zeta(1-\frac{\rho}{2})& \geq 
\frac{1}{2} \left(\frac{\gamma^c \rho}{2\lambda^c }  -\frac{\gamma^c}{(\lambda^{c})^2 \eta T}   \right) &\zeta = \frac{\gamma^c}{\lambda^c} \\
&= 
\frac{\gamma^c \rho^2 \eta T}{32}, 
\end{align*}
and \cref{eq:h_low_bound} follows by choosing 
\begin{equation}
  \lambda^m= \frac{18}{\sqrt{2m}}, \qquad\gamma^m= \frac{1}{m}-\frac{1}{\sqrt{2}m} ,  
\end{equation}
\[
\gamma^c= \min\left\{\sqrt{\frac{\gamma^m}{30 \sqrt{m} \eta T}}, \frac{4}{\rho \eta T \sqrt{3}}\right\}=\Theta\left(\min \left\{ \sqrt{\frac{1}{m^{1.5}\eta T}},\frac{1}{\eta T}\right\} \right).
\]
\end{proof}
\subsection{Proof of \cref{cl:dynamics}}\label{prf:dynamics}
We are thus left to show that \cref{eq:dynamics_d=m} holds under the regime of \cref{eq:cond_on_gammas_d=m}, which will complete the proof. We prove it by induction. The case $t=1$ can be verified by observing that the empirical risk for a sample $S$ is given by:
\[F_S(w) = h^{\zeta}_S(w^c) - \frac{1}{m} \sdot{w^m}{ v_S} +\max\{p(w),0\} +\frac{\lambda^{m}}{2}\|w^m\|^2 +\frac{\lambda^{c}}{2}\|w^c\|^2.\]
Then we note that $\nabla h^{\zeta}_S(0)=0$, and $p(w_0)=p(0)=0$ which entails that $ 0 \in \partial \max\{p(w_0),0\} $. One can show that our bounds on the learning rate and $m$ ensures that no projection is involved, then \(w_1^c=0\) and \(w_1^m=\frac{\eta}{m}v_S\), as required.
We next assume \cref{eq:dynamics_d=m} holds for $t$ and we prove it for $t+1$. To prove the claim we need to show that: 
 \begin{equation}\label{eq:partialh}
\nabla h_S^\zeta(w_t^c)=0
\end{equation}
 \begin{equation}\label{eq:partialp}
\nabla p(w_t) = \left(-\gamma^c\overline{G(v_S^s)}, \gamma^m v_S^s\right), \qquad \text{and} \qquad p(w_t) > 0
 \end{equation}
Once we obtain these two facts, the result follows from the induction hypothesis and the following computation:
\begin{align*} w_{t+1}^m &= w_t^m - \eta\partial_{w^m} F_S(w_t)\\
&=  w^{m}_t+\frac{\eta}{m} v_S  - \eta \gamma^m v^{s}_S - \eta \lambda^{m} w^{m}_{t}\\
&=  \left(1-\eta \lambda^{m} \right)w^{m}_t+\frac{\eta}{m} v_S   - \eta \gamma^m v^{s}_S  &\cref{eq:partialp}\\
&=  \left(1-\eta \lambda^{m} \right)\left(\left(1-\eta \lambda^{m} \right)^{t-1}\frac{\eta}{m}v_S
+
\frac{1-\left(1-\eta \lambda^{m} \right)^{t-1}}{\lambda^{m}}
(\frac{1}{m} v_S-\gamma^{m}v^{s}_S)\right)+\frac{\eta}{m} v_S  - \eta \gamma^m v^{s}_S  \\
&=  \left(1-\eta \lambda^{m} \right)^{t}\frac{\eta}{m}v_S
+
\frac{\left(1-\eta \lambda^{m} \right)-\left(1-\eta \lambda^{m} \right)^{t}}{\lambda^{m}}
(\frac{1}{m} v_S-\gamma^{m}v^{s}_S)+\frac{\eta}{m} v_S  - \eta \gamma^m v^{s}_S  \\
&=\left(1-\eta \lambda^{m} \right)^{t}\frac{\eta}{m} v_S
+
\frac{1-\left(1-\eta \lambda^{m} \right)^{t}}{\lambda^{m}}
\left(
\frac{1}{m} v_S - \gamma^{m}v^{s}_S\right) 
\end{align*}
and
\begin{align*} w_{t+1}^c &= w_t^c - \eta\partial_{w^c} F_S(w_t)  \\
& =  w_t^c +\gamma^c\eta\overline {G(v^{s}_S)}-\lambda^c \eta w_t^c &\cref{eq:partialh,eq:partialp}\\
&= (1-\eta \lambda^c) w_t^c +\gamma^c\eta\overline {G(v^{s}_S)}\\
&= \frac{\gamma^c(1-\eta \lambda^c)}{\lambda^c}\left(1 -(1-\eta \lambda^c)^{t-1}\right)\overline{G(v^{s}_S)}+\gamma^c\eta\overline {G(v^{s}_S)}\\
&= \frac{\gamma^c}{\lambda^c}\left(1 -(1-\eta \lambda^c)^{t}\right)\overline{G(v^{s}_S)}
\end{align*}
The norm of both terms is bounded: since by \cref{eq:cond_on_gammas_d=m} $\frac{\gamma^c}{\lambda^c}\leq1/\sqrt{3}$ and therefore $\|w_{t+1}^c\|\le 1/\sqrt{3}$ and
\begin{align*}
\|w_{t+1}^m\|
&\le \frac{\eta}{m}\|v_S\| + \frac{1}{\lambda^m}\Big(\frac{1}{m}\|v_S\|+\gamma^m\|v_S^s\|\Big) \\
&\le \frac{3\eta}{\sqrt{m}}+\frac{1}{\lambda^m}\Big(\frac{3}{\sqrt{m}}+\frac{\sqrt{2}}{\sqrt{m}}\Big) & \cref{eq:vS_bound}\\
&\le \frac{3}{\sqrt{m}}+\frac{\sqrt{2m}}{18}\cdot\frac{3+\sqrt{2}}{\sqrt{m}} & \cref{eq:cond_on_gammas_d=m}
\\ 
&\le \sqrt{\frac{2}{3}} & m>80^2
\end{align*}
In turn, $\|w_{t+1}\|^2=\|w_{t+1}^c\|^2+\|w_{t+1}^m\|^2\le \frac{1}{3}+\frac{2}{3}=1$, and the projection onto the unit ball is inactive, and the proof is complete.  We are thus left with showing \cref{eq:partialh,eq:partialp}.

\paragraph{Proof of \cref{eq:partialh}} We begin by showing that $\nabla h_S^\zeta(w_t^c)=0$. Indeed, notice that for every $x\in W_{z_i}$, since $G(v^{s}_S)\notin W_{z_i}$, we have by the definition of the code that
$\sdot{\overline{G(v^{s}_S)}}{\overline{x}}< 1-\frac{\rho}{2}$.
In particular, for $w_t^c$, by the induction hypothesis and the assumption $\zeta \ge \frac{\gamma^c}{\lambda^c}$:
\begin{align*}
    \max_{x\in W_{z_i}} \sdot{w_t^c}{\overline{x}} &= \max_{x\in W_{z_i}}\left\{ \frac{\gamma^c}{\lambda^c}(1-(1-\eta \lambda^c)^{t-1})\sdot{\overline{G(v^{s}_S)}}{\overline{x}} \right\}& \cref{eq:dynamics_d=m} \\
    &< \frac{\gamma^c}{\lambda^c}\left(1-\frac{\rho}{2}\right) & G(v^{s}_S)\notin W_{z_i}\\
    & \le \zeta \left(1-\frac{\rho}{2}\right) &\zeta \ge \frac{\gamma^c}{\lambda^c}
\end{align*}
Therefore, $\nabla h^{\zeta}(w_t^c,z_i)= 0$.
\paragraph{Proof of \cref{eq:partialp}}
Next, we want to show that
\begin{equation}\label{eq:pwtc_d=m} v^{s}_S =\arg\max_{v\in \{-1,1\}^k}\left\{\gamma^m \sdot{v}{w_t^m} - \gamma^c \sdot{\overline{G(v)}}{w_t^c}\right\}, \qquad \text{and} \qquad p(w_t)>0\end{equation}
and that $v^{s}_S$ is the unique maximizer. This will establish \cref{eq:partialp}.
To see \cref{eq:pwtc_d=m}, first note that $v_S^s$ indeed maximizes the first term. Now, replace $v_S^s$ by any $v\in\{-1,1\}^k$ that differs from $v_S^s$ in at least one coordinate (say, $i$). This will decrease the first term by at least $2\gamma^m |w_t(i)|$: 
\begin{align*} 2\gamma^m |w_t^m(i)| &= 
2\gamma^m\cdot \left(\left(1-\eta \lambda^{m} \right)^{t-1}\frac{\eta}{m} |v_S(i)| +  \frac{1-\left(1-\eta \lambda^{m} \right)^{t-1}}{\lambda^{m}}\left(\frac{1}{m} |v_S(i)|- \gamma^m\right)\right) \\ 
&\geq
2\gamma^m\cdot \left( \frac{1-\left(1-\eta \lambda^{m} \right)^{t-1}}{\lambda^{m}}\left(\frac{1}{m} |v_S(i)|- \gamma^m\right)\right)  \\ 
&\ge \frac{\gamma^m}{9\sqrt{m}}\left(1-\left(1-\eta \lambda^{m} \right)^{t-1}\right)& \cref{eq:cond_on_gammas_d=m}, 1\leq |v_S(i)| \\
&\ge \frac{\gamma^m}{9\sqrt{m}}\left(1-e^{-(t-1)\eta \lambda^m}\right)& 1-x <e^{-x} \\
&\geq
 \frac{\gamma^m}{9\sqrt{m}}\min\left\{\frac{(t-1)\eta \lambda^m}{2},1-e^{-1}\right\} \\
&\ge\min\Bigl\{\frac{\gamma^m (t-1)\eta}{m\sqrt{2}},\frac{\gamma^m(1-e^{-1})}{9\sqrt{m}}\Bigr\} &\lambda^m>\frac{18}{\sqrt{2m}} .
\end{align*}
where in the one before last inequality we used that $1-e^{-x}\ge \min\{x/2,1-e^{-1}\}$ for all $x\ge 0$. The second term, by Cauchy-Schwartz, can increase by at most 
\begin{align*}
2\gamma^c|\sdot{\overline{G(v)}}{w_t^c}|\le 2\gamma^c\|w_t^c\| &=  
 2\frac{(\gamma^c)^2}{\lambda^c}\left(1-(1-\eta \lambda^c)^{t-1}\right) \\
 &= 2\eta (\gamma^c)^2 \frac{1-(1-\eta \lambda^c)^{t-1}}{1-(1-\eta \lambda^c)}\\
 &=
 2\eta (\gamma^c)^2 \sum_{t'=0}^{t-2} (1-\eta \lambda^c)^{t'}\\
 &< 2(\gamma^c)^2 \eta  (t-1) \\
\end{align*}
Finally, the conditions $\gamma^c \leq \sqrt{\frac{\gamma^m }{30\sqrt{m}\eta T}}$ and \( \eta T>\sqrt{m}\) ensures that \cref{eq:pwtc_d=m} indeed holds (one can show that a similar calculation also holds by taking $v=0$, hence $p(w_t)>0$).
\ignore{
\section{Generalization Lower Bound for $\lambda$-strongly convex problems with $\lambda >1/\sqrt{m}$ (Proof Sketch)}\label{prf:sketch}
Our construction relies on the following well known foklore fact of distinguishing an unbiased coin from a biased coin: suppose $A$ is an algorithm that observes $X_1,\ldots, X_m$ instances of an i.i.d random variables with values $\{-1,1\}$ such that: If $\E[X]=0$ then $A$ states with probability $3/4$ UNBIASED, and if $|\E[X]| > \epsilon$ then $A$ states with probability $3/4$ BIASED, then $m=\Omega (1/\epsilon^2)$.

Next, Suppose that the generalization error of the empirical risk is $\epsilon$. Set

\[ f(w,X)=\begin{cases}
    \frac{\lambda}{2}|w|^2 + \frac{1+2\sqrt{\epsilon\lambda}}{2}w & X=-1\\
    \frac{\lambda}{2}|w|^2 + \frac{-1+2\sqrt{\epsilon\lambda}}{2}w & X =1
\end{cases},\]
Let $A$ be an algorithm that observes $m$ samples, and assume it has generalization error of $\epsilon$, with probability at least $4/5$ (Notice that by Markov inequality, if $A$ has sample complexity $m(\epsilon)$, then we can achieve this with sample complexity $m(O(\epsilon))$).

Use the following procedure: Given a random variable $X\in \{-1,1\}$, we use the algorithm $A$ over the sample $X_1,\ldots, X_m$, and if $w_S>0$ we output \textrm{BIASED} and if $w_S<0$ we output \textrm{UNBIASED}.

First, suppose $\E[X]=0$ (i.e unbiased):
\[ \E[f(w,X)] = \frac{\lambda}{2}|w|^2+ \frac{2\sqrt{\epsilon\lambda}}{2}w.\]
If $\lambda >\epsilon$, then, $\sqrt{\frac{\epsilon}{\lambda}}=O(1)$ hence, if $A$ learns, we have, by Markov, with probability $4/5$:
\[\frac{\lambda}{2}|w_S|^2 + L\frac{\epsilon}{2}w_S = F(w_S) <F(-2\sqrt{\frac{\epsilon}{\lambda}}) + \epsilon =0,\]
Hence $w_S< 0$, and algorithm $A$ outputs UNBIASED with probability at least $3/4$.
 Now if $E[X]= 2\sqrt{\epsilon \lambda}$ then
\[ \E[f(w,z)] = \frac{\lambda}{2}|w|^2 -\frac{2\sqrt{\epsilon\lambda}}{2}w,\]
and by a similar calculation as before, with probability at least $3/4$ the algorithm outputs BIASED. Therefore:

\[ m=\Omega \left(\frac{1}{16 \epsilon \lambda}\right),\]
Alternatively
\[ \epsilon = \Omega\left(\frac{1}{m\lambda}\right).\] 
}
\paragraph{Ackgnoweledgmenets}
This work was supported by the European Union (ERC, FoG 101116258). Views and opinions expressed are however those of the author(s) only and do not necessarily reflect those of the European Union or the European Research Council Executive Agency. Neither the European Union nor the granting authority can be held responsible for them.
\bibliographystyle{plainnat}
\bibliography{references}

\end{document}